\documentclass{article} 
\usepackage{iclr2026_workshop,times}

\usepackage{amsmath,amsfonts,bm}

\def\eqref#1{equation~\ref{#1}}

\def\1{\bm{1}}

\DeclareMathAlphabet{\mathsfit}{\encodingdefault}{\sfdefault}{m}{sl}
\SetMathAlphabet{\mathsfit}{bold}{\encodingdefault}{\sfdefault}{bx}{n}

\usepackage{hyperref}
\usepackage{url}
\usepackage{algorithm}   
\usepackage{algorithmic} 
\usepackage{microtype}
\usepackage{graphicx}
\usepackage{subfigure}
\usepackage{booktabs} 
\usepackage{multirow}
\usepackage{colortbl}      
\usepackage{xcolor}        %
\usepackage{amsthm}
\iclrfinalcopy 
\newtheorem{definition}{Definition}
\newtheorem{theorem}{Theorem}

\theoremstyle{remark}
\newtheorem{remark}{Remark}
\usepackage[capitalize,noabbrev]{cleveref}

\title{CoE: Collaborative Entropy for Uncertainty Quantification in Agentic Multi-LLM Systems}

\author{Kangkang Sun, Jun Wu, Jianhua Li, Minyi Guo, Xiuzhen Chen  \\
Shanghai Key Laboratory of Integrated Administration Technologies for Information Security \\
School of Computer Science\\
Shanghai Jiao Tong University\\
Shanghai, China \\
\texttt{\{szpsunkk, junwuhn, lijh888, myguo, chenxz\}@sjtu.edu.cn} \\
\And
Jianwei Huang \\
School of Science and Engineering \\
The Chinese University of Hong Kong, Shenzhen \\
Shenzhen, China \\
\texttt{\{jianweihuang\}@cuhk.edu.hk} \\
}

\iclrfinaltrue

\begin{document}

\maketitle

\begin{abstract}

Uncertainty estimation in multi-LLM systems remains largely single-model-centric: existing methods quantify uncertainty within each model but do not adequately capture semantic disagreement across models. To address this gap, we propose Collaborative Entropy (CoE), a unified information-theoretic metric for semantic uncertainty in multi-LLM collaboration. CoE is defined on a shared semantic cluster space and combines two components: intra-model semantic entropy and inter-model divergence to the ensemble mean. CoE is not a weighted ensemble predictor; it is a system-level uncertainty measure that characterizes collaborative confidence and disagreement.
We analyze several core properties of CoE, including non-negativity, zero-value certainty under perfect semantic consensus, and the behavior of CoE when individual models collapse to delta distributions. These results clarify when reducing per-model uncertainty is sufficient and when residual inter-model disagreement remains. We also present a simple CoE-guided, training-free post-hoc coordination heuristic as a practical application of the metric.
Experiments on \textit{TriviaQA} and \textit{SQuAD} with LLaMA-3.1-8B-Instruct, Qwen-2.5-7B-Instruct, and Mistral-7B-Instruct show that CoE provides stronger uncertainty estimation than standard entropy- and divergence-based baselines, with gains becoming larger as additional heterogeneous models are introduced. Overall, CoE offers a useful uncertainty-aware perspective on multi-LLM collaboration.
\end{abstract}

\section{Introduction}

Large language models (LLMs) have demonstrated remarkable capabilities across natural
language processing and decision-making tasks~\cite{chang2024evince, singhal2023large},
with systems such as GPT-4~\cite{achiam2023gpt}, Gemini~\cite{team2023gemini}, and
DeepSeek~\cite{liu2024deepseek} achieving strong performance on complex reasoning and
question-answering benchmarks. Yet these models remain susceptible to
\textit{hallucinations}~\cite{fadeeva2024fact}, \textit{solution space
bias}~\cite{holtzman2019curious}, and \textit{error
propagation}~\cite{prystawski2023think}—failure modes that are particularly consequential
in high-stakes domains such as medicine and law, where unreliable outputs can cause serious
harm.

Agentic multi-LLM frameworks have emerged as a principled response to these limitations.
By allowing heterogeneous models to cross-check reasoning through structured
interaction~\cite{knott2008multi, luo2025toward, li2025collaborative, ran2025mcce},
these systems leverage collective intelligence to improve both accuracy and reliability.
Empirical studies confirm that multi-LLM collaboration can reduce individual model
errors~\cite{wang2024rethinking}; however, they also reveal that collaboration quality
is highly uneven~\cite{cai2025survey, cemri2025multi}. A key obstacle is
\textit{semantic uncertainty}: when models disagree at the level of meaning—not merely
surface form—their outputs can conflict in ways that degrade rather than improve system
performance~\cite{abdelnabi2024cooperation, chan2023chateval, li2023theory}.
Understanding and measuring this disagreement is therefore a prerequisite for reliable
multi-LLM deployment.

\paragraph{The measurement gap.}
Uncertainty Quantification (UQ) is well-established for individual LLMs. Token-level
entropy~\cite{geng2024survey}, semantic entropy~\cite{farquhar2024detecting,
kuhn2023semantic}, self-consistency~\cite{rivera2024combining, ling2024uncertainty},
reflection~\cite{self2024self}, and conformal prediction~\cite{wang2024rethinking} all
provide principled estimates of a \emph{single} model's confidence. However, none of
these methods is designed to characterize uncertainty at the \emph{system} level in a
multi-LLM setting. The most natural extension—averaging per-model semantic entropy across
the ensemble—suffers from two fundamental limitations. First, it treats all models as
independent and interchangeable, ignoring the \textit{inter-model epistemic divergence}
that arises when heterogeneous models (e.g., LLaMA, Qwen, Mistral), trained on distinct
corpora and aligned via different procedures, disagree at the semantic level. Second, it
conflates two qualitatively different uncertainty sources—\textit{intra-model aleatoric
uncertainty} (a single model's internal ambiguity over semantic clusters) and
\textit{inter-model epistemic uncertainty} (cross-model semantic disagreement)—into a
single scalar, thereby discarding actionable diagnostic information about \emph{why} the
system is uncertain.

This conflation is not merely a theoretical inconvenience. In practice, the two
components call for different interventions: high intra-model uncertainty suggests that
individual models require more diverse sampling or better prompting, while high
inter-model uncertainty signals that the models are each internally confident but
semantically inconsistent with one another. A metric that collapses both into one number
forecloses this distinction entirely.

\paragraph{Collaborative Entropy.}
To fill this gap, we propose \textbf{Collaborative Entropy (CoE)}, a unified
information-theoretic metric for semantic uncertainty in agentic multi-LLM systems.
CoE is defined over a shared semantic cluster space and decomposes system-level
uncertainty into two interpretable components:
\begin{equation}
    \mathcal{U}_{\mathrm{CoE}}(\mathcal{K})
    \;=\;
    \underbrace{
        \frac{1}{|\mathcal{K}|}\sum_{i \in \mathcal{K}} SE(x_i)
    }_{\text{intra-model (aleatoric)}}
    \;+\;
    \underbrace{
        \sum_{i \in \mathcal{K}} w_i \cdot
        \mathcal{D}_{\mathrm{KL}}\!\left(p_i(\cdot|x) \,\|\, \bar{p}(\cdot|x)\right)
    }_{\text{inter-model (epistemic)}}
    \label{eq:coe_intro}
\end{equation}
where $SE(x_i)$ is the semantic entropy of model $\mathcal{M}_i$, $p_i(\cdot|x)$ is its
cluster-level output distribution, and $\bar{p}(\cdot|x) = \sum_i w_i p_i(\cdot|x)$ is
the weighted ensemble mean. Crucially, CoE is \emph{not} a weighted ensemble predictor
or an output-scoring rule: it is a system-level uncertainty measure that characterizes
collaborative confidence and cross-model semantic disagreement, independently of any
downstream task objective.

We analyze several core properties of CoE that formalize its behavior as a metric.
\textit{Non-Negativity} (Theorem~1) establishes that CoE is a valid uncertainty measure
bounded below by zero. \textit{Zero Value Certainty} (Theorem~2) proves that
$\mathcal{U}_{\mathrm{CoE}} = 0$ if and only if all models place full probability mass on
the same semantic cluster—i.e., perfect semantic consensus. \textit{Single-Model Negative
Entropy Maximization} (Theorem~3) characterizes the partial reduction achievable through
per-model uncertainty minimization: when each model collapses to a delta distribution, the
aleatoric component vanishes, but residual inter-model disagreement may persist, reflecting
irreducible epistemic divergence across heterogeneous models. Together, these results
clarify when reducing per-model uncertainty is sufficient for system-level certainty, and
when cross-model alignment is additionally required—a distinction invisible to
single-model-centric metrics. As a practical application of the metric, we also present a
simple CoE-guided, training-free post-hoc coordination heuristic that uses CoE-derived
signals to refine model weighting at inference time, without updating any model parameters.

\paragraph{Contributions.}
The main contributions of this paper are as follows:

\begin{itemize}

    \item \textit{A unified system-level uncertainty metric for multi-LLM collaboration.}
    We propose Collaborative Entropy (CoE), an information-theoretic metric that
    quantifies semantic uncertainty jointly across a heterogeneous set of LLMs. CoE is
    defined over a shared semantic cluster space and decomposes uncertainty into two
    interpretable components: intra-model semantic dispersion and inter-model semantic
    disagreement. It is the first metric to jointly capture both sources at the system
    level, and it is distinct from weighted ensemble predictors or output scoring rules.

    \item \textit{Formal theoretical analysis of CoE.}
    We prove three properties of CoE—Non-Negativity, Zero Value Certainty, and the
    behavior under per-model delta-distribution collapse—that collectively characterize
    CoE's range, its ideal state, and the conditions under which per-model uncertainty
    reduction does or does not suffice for system-level certainty. These results provide a
    principled foundation for interpreting CoE values in practice.

    \item \textit{A CoE-guided post-hoc coordination heuristic.}
    As a lightweight practical application of the metric, we present a training-free
    inference-time heuristic that uses CoE-related signals to adjust model weights and
    refine ensemble coordination. This procedure should be understood as a direct
    application of the metric rather than a standalone algorithmic contribution.

    \item \textit{Empirical validation as an uncertainty metric.}
    Experiments on TriviaQA and SQuAD with LLaMA-3.1-8B-Instruct,
    Qwen-2.5-7B-Instruct, and Mistral-7B-Instruct show that CoE provides stronger
    uncertainty estimation than standard entropy- and divergence-based baselines in AUROC
    and AURAC, with gains that scale as additional heterogeneous models are introduced.

\end{itemize}

The remainder of this paper is organized as follows.
Section~\ref{se: related work} reviews related work on uncertainty quantification for LLMs and multi-LLM systems.
Section~\ref{se: methodology} introduces the system model, the semantic entropy preliminaries on which CoE builds, and the formal definition of Collaborative Entropy together with its two-component decomposition.
Section~\ref{se: quadrants} interprets CoE through the lens of four qualitatively distinct uncertainty regimes on the $(\mathcal{U}_{\mathcal{A}},\,\mathcal{U}_{\mathcal{E}})$ plane.
Section~\ref{se: Experiment} presents the experimental setup and results.
Section~\ref{se: Conclusion} concludes the paper.
Appendix~\ref{se: Collaborative-Entropy-based Reduction optimization and Analysis} provides the formal theoretical analysis of CoE, including proofs of Non-Negativity, Zero Value Certainty, and Delta-Distribution Behaviour, and describes the CoE-guided post-hoc coordination heuristic.

\begin{figure*}[t]
    \centering
    \includegraphics[width=0.9\linewidth]{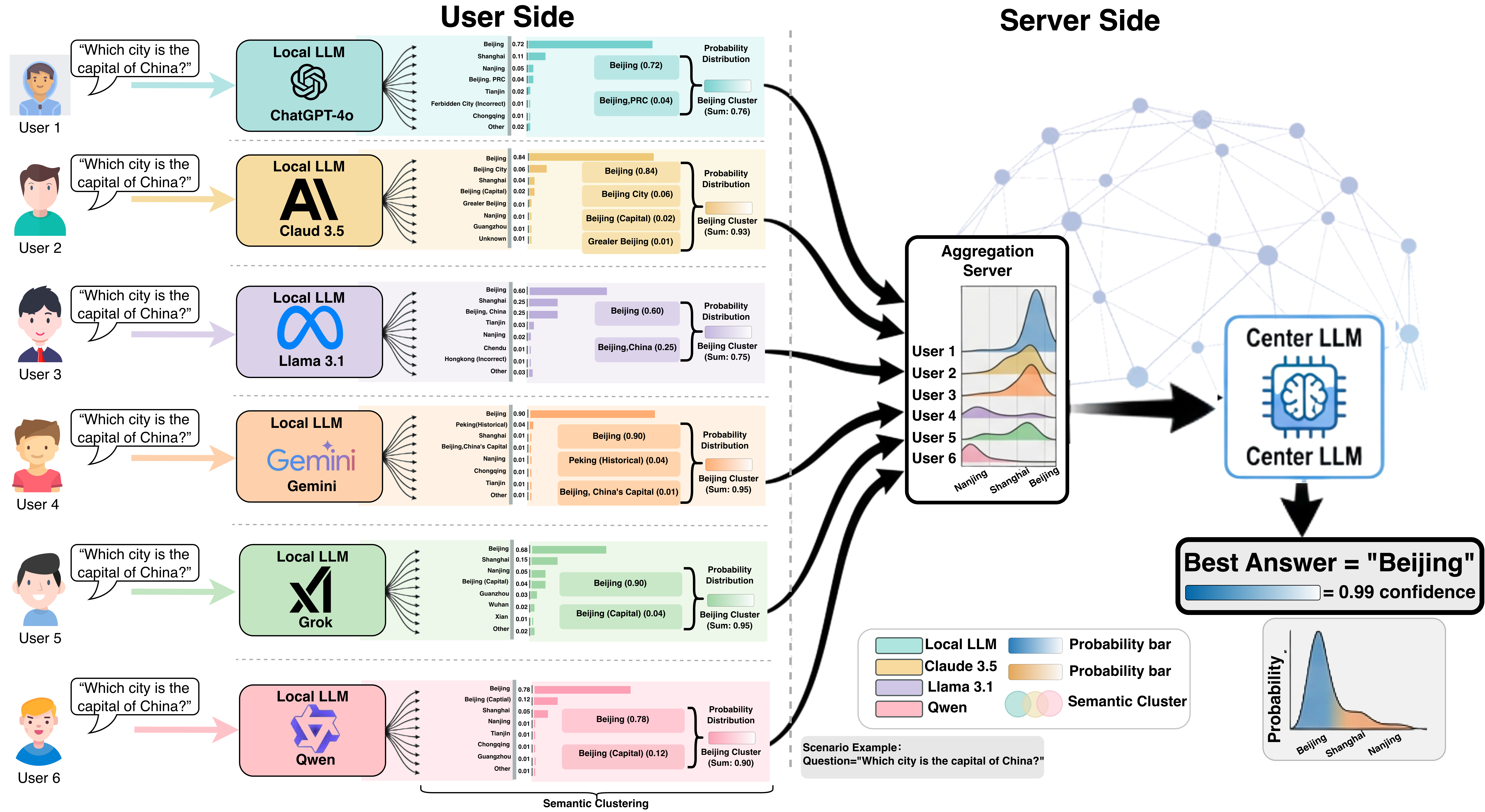}
    \caption{Overview of the proposed Collaborative Entropy (CoE) framework for agentic multi-LLM uncertainty quantification. Each LLM generates candidate responses that are clustered semantically to form probability distributions, which are then aggregated to minimize system-level collaborative entropy.}
    \label{fig: background}
\end{figure*}

\section{Related Work}
\label{se: related work}

UQ methods for LLMs span token-level entropy aggregation~\cite{geng2024survey},
self-verbalized confidence~\cite{rivera2024combining}, and semantic
entropy~\cite{farquhar2024detecting, kuhn2023semantic}—which clusters generations via
bidirectional entailment to provide paraphrase-invariant uncertainty estimates—as well
as classical approaches such as Bayesian networks and deep
ensembles~\cite{jospin2022hands} that scale poorly to modern LLMs. Benchmarks including
LM-Polygraph~\cite{fadeeva2023lm} and MAQA~\cite{yang2025maqa} reveal persistent
calibration gaps~\cite{ling2024uncertainty}, while recent surveys~\cite{shorinwa2025survey}
categorize open challenges along input ambiguity, reasoning divergence, and decoding
stochasticity. However, all these methods are \emph{single-model-centric}: they quantify
uncertainty within one model and cannot characterize semantic disagreement across a
heterogeneous ensemble. At the multi-LLM level, self-consistency~\cite{rivera2024combining}
treats cross-model agreement as a voting signal rather than a principled measure, and
graph-structured approaches such as DEUP~\cite{lahlou2021deup} and
TopologyUQ~\cite{da2025understanding} propagate uncertainty through ensemble pipelines
but do not define a closed-form metric over shared semantic clusters. The closest prior
work, \citet{kruse2025simple}, applies information-theoretic reasoning to multi-LLM
uncertainty but aggregates per-model entropy scores without decomposing aleatoric and
epistemic components or providing formal metric guarantees. CoE addresses this gap by
operating on a shared semantic cluster space—following the framework of semantic
entropy~\cite{farquhar2024detecting, kuhn2023semantic}—and introducing an asymmetric KL
divergence term measured against the ensemble mean, a reference structure that is
sensitive to directional epistemic divergence in ways that symmetric alternatives
(e.g., Jensen-Shannon) are not, as our experiments confirm.

\section{Methodology}
\label{se: methodology}

We begin by describing the multi-LLM collaborative system that motivates CoE
(Sec.~\ref{se: System Overview}), and then introduce the semantic entropy (SE)
preliminaries on which CoE is built (Sec.~\ref{se: Uncertainty Quantification Metrics
of agentic Multi-LLM Systems}).

\subsection{System Overview}
\label{se: System Overview}

Figure~\ref{fig: background} illustrates the proposed agentic multi-LLM collaborative
system, which consists of two primary layers.

\begin{itemize}

    \item \textbf{Local Inference and Clustering.}
    Each LLM agent $\mathcal{M}_i$ receives the same query $x$ and independently
    generates a set of candidate responses. These responses are partitioned into semantic
    clusters $\{c_j\}$ via bidirectional entailment, yielding a cluster-level probability
    distribution $p_i(c_j \mid x)$ that summarises the model's semantic confidence.

    \item \textbf{Central Aggregation.}
    A central aggregator collects the cluster distributions from all agents and computes
    the system-level Collaborative Entropy (CoE). The aggregator identifies the semantic
    cluster with the lowest associated CoE as the system's most reliable answer.

\end{itemize}

This two-layer structure cleanly separates the per-model uncertainty estimation
(intra-model) from the cross-model disagreement measurement (inter-model), corresponding
directly to the two components of CoE defined in Sec.~\ref{se: CoE formulation}.

\subsection{Semantic Entropy: Single-Model Uncertainty Baseline}
\label{se: Uncertainty Quantification Metrics of agentic Multi-LLM Systems}

Before defining CoE, we recall the semantic entropy (SE) metric of
\citet{farquhar2024detecting}, which serves as the intra-model component of CoE and as
the primary single-model baseline in our experiments.

\begin{definition}[Semantic Entropy, \citealp{farquhar2024detecting}]
\label{de: SE}
For an LLM $\mathcal{M}$ and input $x \in \mathcal{X}$, let $\{s^{(1)}, \dots, s^{(m)}\}$
be $m$ sampled output sequences, partitioned into $l$ semantic clusters
$\{c_1, \dots, c_l\} \subseteq \mathcal{C}$ via bidirectional entailment. The
\emph{semantic entropy} of $\mathcal{M}$ on $x$ is:
\begin{equation}
    SE(x) \;=\; -\sum_{j=1}^{l} p(c_j \mid x)\,\log\, p(c_j \mid x),
    \label{eq: SE}
\end{equation}
where $p(c_j \mid x) = \sum_{s^{(k)} \in c_j} p(s^{(k)} \mid x)$ aggregates the
length-normalized generation probabilities of all sequences assigned to cluster $c_j$.
\end{definition}

SE measures how spread the model's probability mass is across semantically distinct
answers: $SE(x) = 0$ when the model concentrates entirely on one semantic cluster, and
$SE(x)$ is maximised when probability mass is distributed uniformly. SE is robust to
surface paraphrase because it operates on semantic clusters rather than raw token
sequences. However, as a single-model metric, SE cannot capture the disagreement that
arises when \emph{multiple} heterogeneous models each have low internal entropy yet
commit to \emph{different} semantic clusters. Addressing this limitation is the central
motivation for CoE.

\subsection{Collaborative Entropy: Decomposing System-Level Uncertainty}
\label{se: CoE formulation}

We now define CoE formally. Let $\mathcal{K}$ denote a set of $K$ LLMs
$\mathcal{M} = \{\mathcal{M}_1, \dots, \mathcal{M}_K\}$ and let
$\{w_1, \dots, w_K\}$ be non-negative ensemble weights satisfying
$\sum_{i=1}^{K} w_i = 1$. For input $x$, each model $\mathcal{M}_i$ generates $m_i$
sequences that are collectively partitioned into $l$ shared semantic clusters
$\{c_1, \dots, c_l\} \subseteq \mathcal{C}$ via bidirectional entailment.
Let $p_i(\cdot \mid x)$ denote the resulting cluster distribution of model
$\mathcal{M}_i$, and let $\bar{p}(\cdot \mid x) = \sum_{i=1}^{K} w_i\, p_i(\cdot \mid x)$
denote the weighted ensemble mean distribution.

CoE decomposes system-level semantic uncertainty into two interpretable components:

\paragraph{Intra-model aleatoric uncertainty $\mathcal{U}_{\mathcal{A}}$.}
This term captures the average internal ambiguity of individual models:
\begin{equation}
\label{eq: U_A}
\mathcal{U}_{\mathcal{A}}(\mathcal{K})
\;=\; \frac{1}{K} \sum_{i=1}^{K} SE(x_i).
\end{equation}
High $\mathcal{U}_{\mathcal{A}}$ indicates that individual models are themselves uncertain,
regardless of whether they agree with each other.

\paragraph{Inter-model epistemic uncertainty $\mathcal{U}_{\mathcal{E}}$.}
This term captures the semantic divergence of each model's distribution from the ensemble
consensus:
\begin{equation}
\label{eq: U_E}
\mathcal{U}_{\mathcal{E}}(\mathcal{K})
\;=\; \sum_{i=1}^{K} w_i \cdot
\mathcal{D}\!\left(p_i(\cdot \mid x) \;\|\; \bar{p}(\cdot \mid x)\right),
\end{equation}
where $\mathcal{D}(\cdot \| \cdot)$ is a distributional divergence. In our primary
formulation we use the asymmetric KL divergence,
$\mathcal{D}_{\mathrm{KL}}(p_i \| \bar{p})
= \sum_{j=1}^{l} p_i(c_j \mid x)\log\frac{p_i(c_j \mid x)}{\bar{p}(c_j \mid x)}$,
whose reference-directed asymmetry is critical for capturing directional epistemic
disagreement (see Sec.~\ref{se: Experiment}). We also evaluate symmetric alternatives
(JS, Wasserstein, Hellinger) as ablations. High $\mathcal{U}_{\mathcal{E}}$ indicates
that models disagree semantically even if each is individually confident.

\paragraph{Collaborative Entropy (CoE).}

\begin{definition}[Collaborative Entropy]
\label{de: CoE}
The \emph{Collaborative Entropy} of a multi-LLM system $\mathcal{K}$ on input $x$ is:
\begin{equation}
\label{eq: objective}
\mathcal{U}_{CoE}(\mathcal{K})
\;=\;
\underbrace{
    \frac{1}{K}\sum_{i=1}^{K} SE(x_i)
}_{\mathcal{U}_{\mathcal{A}}:\;\text{intra-model}}
\;+\;
\underbrace{
    \sum_{i=1}^{K} w_i \cdot
    \mathcal{D}_{\mathrm{KL}}\!\left(p_i(\cdot \mid x) \;\|\; \bar{p}(\cdot \mid x)\right)
}_{\mathcal{U}_{\mathcal{E}}:\;\text{inter-model}}.
\end{equation}
\end{definition}

\noindent
CoE is a \emph{measurement} of system-level semantic uncertainty: it is not a
task-specific scoring rule, a weighted ensemble predictor, or an output selector. Its
two components are interpretable and actionable—high $\mathcal{U}_{\mathcal{A}}$ calls
for improving per-model sampling or prompting, while high $\mathcal{U}_{\mathcal{E}}$
calls for inter-model alignment. The theoretical properties of CoE are analysed in
Appendix~\ref{se: Collaborative-Entropy-based Reduction optimization and Analysis}.

\section{CoE Decomposition: The Four Uncertainty Quadrants}
\label{se: quadrants}

\begin{figure}[h]
    \centering
    \includegraphics[width=0.52\textwidth]{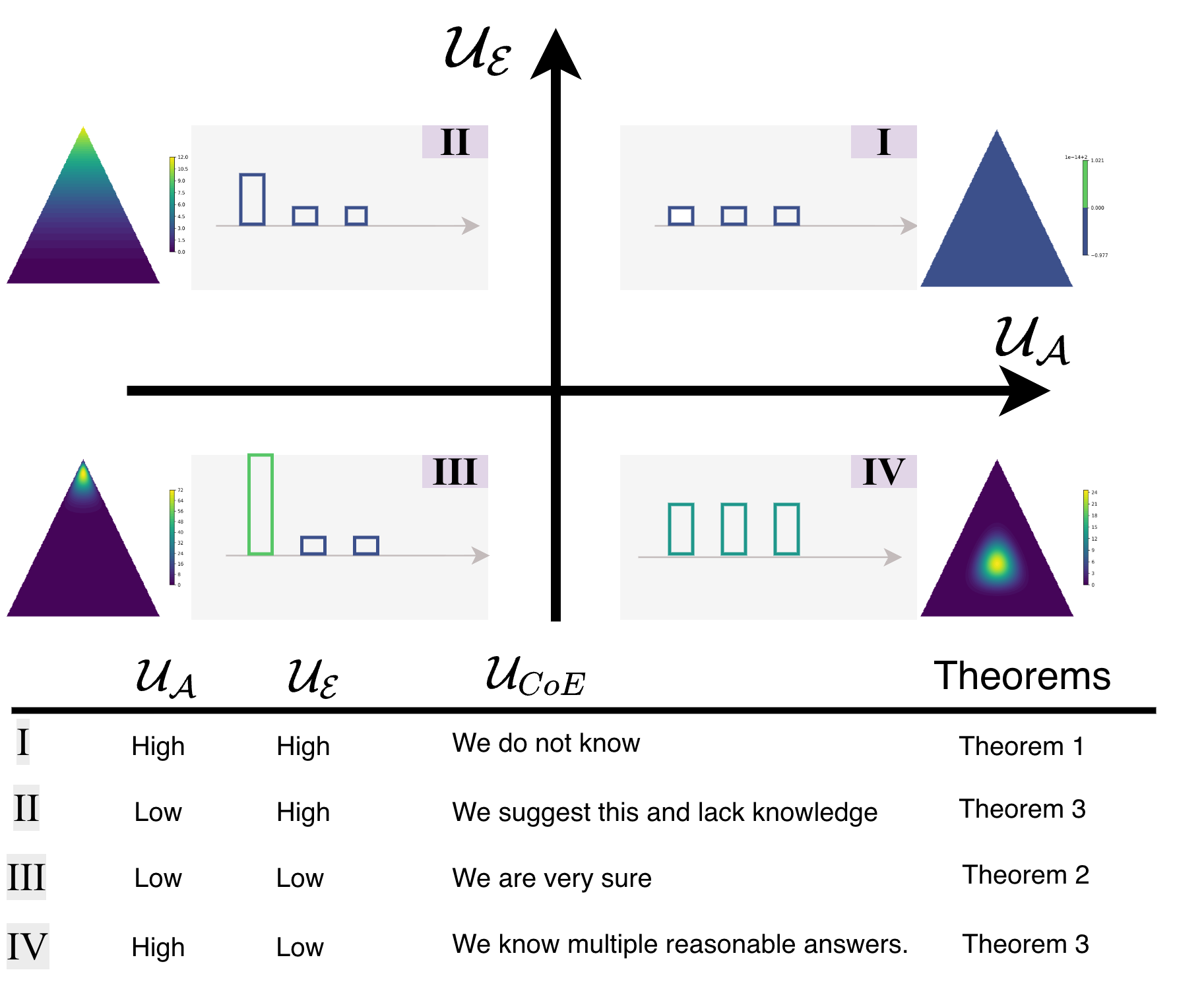}
    \caption{
        Visualization of the four CoE quadrants using Dirichlet distributions
        over three semantic clusters. Each simplex (colour intensity = probability
        density) and accompanying bar chart illustrates a characteristic combination
        of $\mathcal{U}_{\mathcal{A}}$ (horizontal axis) and
        $\mathcal{U}_{\mathcal{E}}$ (vertical axis).
    }
    \label{fig: eu-au}
\end{figure}

Figure~\ref{fig: eu-au} maps the four qualitatively distinct states of a multi-LLM
system onto the $(\mathcal{U}_{\mathcal{A}},\,\mathcal{U}_{\mathcal{E}})$ plane.

\begin{enumerate}

    \item \textbf{High $\mathcal{U}_{\mathcal{A}}$ + High $\mathcal{U}_{\mathcal{E}}$
    — ``We do not know.''}
    Both intra-model ambiguity and inter-model disagreement are severe.
    CoE is maximally accumulated; Theorem~\ref{th: non-negativity} establishes
    that $\mathcal{U}_{CoE}\geq 0$ holds as a global lower bound throughout
    this worst-case regime.

    \item \textbf{Low $\mathcal{U}_{\mathcal{A}}$ + High $\mathcal{U}_{\mathcal{E}}$
    — ``Models are confident but disagree.''}
    Individual models have concentrated distributions (low internal entropy),
    yet they commit to \emph{different} semantic clusters.
    This is precisely Case B of Theorem~\ref{th: single-LLM negative entropy maximization}:
    per-model entropy maximisation has eliminated $\mathcal{U}_{\mathcal{A}}$,
    but residual epistemic divergence $\mathcal{U}_{\mathcal{E}}>0$ persists
    and can only be reduced through inter-model alignment.

    \item \textbf{Low $\mathcal{U}_{\mathcal{A}}$ + Low $\mathcal{U}_{\mathcal{E}}$
    — ``We are very sure.''}
    All models concentrate on the same cluster.
    By Theorem~\ref{th: zero value certainty}, $\mathcal{U}_{CoE}=0$; this is the
    ideal state and the target of the CoE-guided coordination heuristic.

    \item \textbf{High $\mathcal{U}_{\mathcal{A}}$ + Low $\mathcal{U}_{\mathcal{E}}$
    — ``Multiple reasonable answers exist.''}
    Models broadly agree on the distribution over clusters but none is internally
    decisive.
    Per Theorem~\ref{th: single-LLM negative entropy maximization}(i), driving
    each model toward its Delta distribution reduces $\mathcal{U}_{\mathcal{A}}$
    and, if consensus is reached, eliminates $\mathcal{U}_{CoE}$ entirely.

\end{enumerate}

\section{Experiments}
\label{se: Experiment}

\subsection{Experimental Setup}
\label{se: setup}

\paragraph{Models.}
We evaluate on three publicly available instruction-tuned LLMs:
LLaMA-3.1-8B-Instruct~\citep{dubey2024llama},
Qwen-2.5-7B-Instruct~\citep{team2024qwen2}, and
Mistral-7B-Instruct~\citep{jiang2024mixtral},
abbreviated as \textbf{Llama}, \textbf{Qwen}, and \textbf{Mistral} throughout.
All models are used without any fine-tuning or parameter modification.
Experiments are conducted on the 7--8B parameter range to ensure reproducibility
and to isolate the effect of multi-LLM collaboration from model-scale confounds;
we discuss the implications of this choice in Sec.~\ref{se: Conclusion}.

\paragraph{Datasets.}
We evaluate on two open-domain QA benchmarks.
\textit{TriviaQA}~\citep{joshi2017triviaqa} is a large-scale reading comprehension
dataset requiring retrieval and compositional reasoning over noisy evidence documents.
\textit{SQuAD}~\citep{rajpurkar2016squad} is an extractive QA dataset built on
Wikipedia articles that demands precise context grounding.
The two benchmarks present complementary challenges: TriviaQA favours broad factual
recall, while SQuAD rewards fine-grained span identification.
Both evaluations use 200 samples drawn from the respective validation splits.
Detailed experimental results are provided in Appendix~\ref{sec: Experimental Results}.

\paragraph{Evaluation metrics.}
Following \citet{farquhar2024detecting}, we evaluate uncertainty metrics along three
complementary axes. \textbf{AUROC}~\citep{mcdermott2024closer}: the area under the
    receiver-operating-characteristic curve, measuring how well the uncertainty score
    ranks correct answers above incorrect ones across all thresholds.
    A value of 1.0 indicates perfect discrimination; 0.5 corresponds to random ranking. \textbf{AURAC}~\citep{kuhn2023semantic}: the area under the
    rejection-accuracy curve, summarising how accuracy improves as the most uncertain
    predictions are progressively abstained from.
    Higher values indicate better selective prediction across all coverage levels. \textbf{Rejection Accuracy}~\citep{nadeem2009accuracy}: accuracy on the
    retained predictions at fixed retention levels (80\%, 90\%, 95\%, 100\%), capturing
    the practical precision--coverage trade-off at specific operating points.

\noindent
AUROC and AURAC assess CoE as a \emph{ranking} signal for uncertainty, while
rejection accuracy assesses it as a \emph{threshold} signal.
Taken together, they provide a comprehensive picture of whether CoE reliably
identifies which predictions to trust.

\subsubsection{Baseline Methods}
\label{se: baselines}

\paragraph{Single-model uncertainty baselines.}
We compare CoE against three standard single-model UQ methods.
For multi-model settings, each baseline is reported as the arithmetic mean over all
constituent models, i.e., $\frac{1}{K}\sum_i \text{metric}(\mathcal{M}_i, x)$. \textbf{Semantic Entropy (SE)}~\citep{farquhar2024detecting}:
    partitions sampled responses into semantic clusters via bidirectional entailment
    and computes Shannon entropy over the cluster distribution.
    When averaged uniformly over $K$ models, this equals the aleatoric component
    $\mathcal{U}_{\mathcal{A}}$ of CoE (Eq.~\ref{eq: U_A}).
    The two are therefore numerically equivalent under uniform weighting but differ
    in interpretation: $\overline{SE}$ treats models as independent, while
    $\mathcal{U}_{\mathcal{A}}$ is framed explicitly as one component of a
    joint decomposition. \textbf{Regular Entropy}~\citep{farquhar2024detecting}:
    computes token-level entropy over the raw generation distribution, without
    semantic clustering. This metric is known to be misleading when semantically
    equivalent answers receive different token sequences. \textbf{$P_{\text{false}}$}~\citep{madaan2023self}:
    the probability assigned by the LLM to the response ``False'' in a
    self-consistency judgment, used as a proxy for model-level uncertainty. We additionally include $\mathcal{U}_{\mathcal{A}}$ and $\mathcal{U}_{\mathcal{E}}$
as \emph{ablation baselines} to isolate the individual contribution of each CoE
component.

\paragraph{Divergence measure ablations.}
To assess the sensitivity of CoE to the choice of inter-model divergence
$\mathcal{D}(\cdot \| \cdot)$ in Eq.~(\ref{eq: objective}), we evaluate four
instantiations that span a range of structural properties. \textbf{KL divergence} $\mathcal{D}_{\mathrm{KL}}$:
    asymmetric and unbounded; measures directed divergence from each model's
    distribution to the ensemble mean. \textbf{Jensen-Shannon divergence} $\mathcal{D}_{\mathrm{JS}}$~\citep{kruse2025simple}:
    symmetric and bounded in $[0, \log 2]$; the symmetrised average of two
    KL terms. \textbf{Wasserstein distance} $\mathcal{D}_{W}$~\citep{arjovsky2017wasserstein}:
    geometry-aware optimal-transport distance between distributions over the
    cluster space. \textbf{Hellinger distance} $\mathcal{D}_{H}$~\citep{beran1977minimum}:
    symmetric, bounded in $[0, 1]$, based on the $\ell_2$ distance between
    square-root densities.
\noindent
These four divergences are asymmetric vs.\ symmetric, bounded vs.\ unbounded, and
geometry-aware vs.\ geometry-agnostic, enabling a controlled comparison of which
structural property matters most for CoE.

\subsection{CoE as an Uncertainty Metric}
\label{se: metric eval}

\begin{table*}[t]
\centering
\caption{
    Selective prediction performance of uncertainty metrics in multi-LLM
    collaborative settings on \textit{TriviaQA} (200 samples).
    For Semantic Entropy, Regular Entropy, and $P_{\text{false}}$, scores are
    reported as the arithmetic mean over constituent models.
    $\mathcal{U}_{\mathcal{A}}$ and $\mathcal{U}_{\mathcal{E}}$ are ablation
    baselines isolating each component of CoE.
    Bold denotes the best AUROC in each configuration.
}
\label{tab:2}
\resizebox{\linewidth}{!}{%
\begin{tabular}{lcccccccccccccc}
\toprule
\multirow{3}{*}{Uncertainty Metric}
    & \multicolumn{6}{c}{Two LLMs (Llama + Qwen)}
    & \phantom{x}
    & \multicolumn{6}{c}{Three LLMs (Llama + Qwen + Mistral)} \\
\cmidrule(lr){2-7}\cmidrule(lr){9-14}
    & \multicolumn{4}{c}{Rejection Accuracy}
    & \multirow{2}{*}{AURAC}
    & \multirow{2}{*}{AUROC}
    &
    & \multicolumn{4}{c}{Rejection Accuracy}
    & \multirow{2}{*}{AURAC}
    & \multirow{2}{*}{AUROC} \\
\cmidrule(lr){2-5}\cmidrule(lr){9-12}
    & 80\% & 90\% & 95\% & 100\%
    &       &
    &
    & 80\% & 90\% & 95\% & 100\%
    &       & \\
\midrule
Semantic Entropy
    & 0.625 & 0.583 & 0.579 & 0.575 & 0.176 & 0.670
    &
    & 0.394 & 0.416 & 0.468 & 0.375 & 0.123 & 0.687 \\
Regular Entropy
    & 0.562 & 0.583 & 0.605 & 0.575 & 0.175 & 0.475
    &
    & 0.342 & 0.333 & 0.343 & 0.375 & 0.103 & 0.416 \\
$P_{\text{false}}$
    & 0.575 & 0.575 & 0.575 & 0.575 & 0.172 & 0.586
    &
    & 0.375 & 0.405 & 0.405 & 0.375 & 0.117 & 0.540 \\
\rowcolor[rgb]{0.851,0.953,0.992}
$\mathcal{U}_{\mathcal{E}}$
    & 0.625 & 0.583 & 0.589 & 0.575 & 0.177 & 0.661
    &
    & 0.368 & 0.388 & 0.437 & 0.375 & 0.116 & 0.716 \\
\rowcolor[rgb]{0.851,1,0.851}
$\mathcal{U}_{\mathcal{A}}$
    & 0.593 & 0.583 & 0.578 & 0.575 & 0.174 & 0.666
    &
    & 0.394 & 0.416 & 0.437 & 0.375 & 0.121 & 0.673 \\
\rowcolor[rgb]{0.925,0.851,0.925}
$\mathcal{U}_{CoE}$
    & 0.593 & 0.583 & 0.578 & 0.575 & 0.174 & \textbf{0.683}$\uparrow$
    &
    & 0.394 & 0.416 & 0.437 & 0.375 & 0.121 & \textbf{0.772}$\uparrow$ \\
\bottomrule
\end{tabular}}
\end{table*}

\paragraph{Results on TriviaQA (Table~\ref{tab:2}).}
We compare six uncertainty metrics across two ensemble configurations on TriviaQA
(200 samples): two-model (Llama + Qwen) and three-model (Llama + Qwen + Mistral). In the two-model setting, CoE achieves the highest AUROC of \textbf{0.683}, surpassing
Semantic Entropy (0.670), $\mathcal{U}_{\mathcal{A}}$ (0.666),
$\mathcal{U}_{\mathcal{E}}$ (0.661), $P_{\text{false}}$ (0.586), and
Regular Entropy (0.475).
Rejection accuracy remains consistently above 0.58 across all retention levels,
indicating well-calibrated selective prediction even with limited inter-model
diversity in a two-model ensemble.
Notably, CoE outperforms both of its individual components
($\mathcal{U}_{\mathcal{A}}$ and $\mathcal{U}_{\mathcal{E}}$) taken alone,
confirming that the two uncertainty sources carry complementary information that is
better captured jointly.

As the ensemble scales to three models, CoE's advantage widens.
It attains an AUROC of \textbf{0.772}, a substantial improvement over the
strongest single-component baseline $\mathcal{U}_{\mathcal{E}}$ (0.716) and over
Semantic Entropy (0.687).
This \emph{scaling behaviour}—where CoE's margin over baselines grows with ensemble
size—is consistent with Theorem~\ref{th: single-LLM negative entropy maximization}:
as more heterogeneous models contribute, inter-model epistemic divergence becomes an
increasingly significant contributor to system uncertainty, and CoE's explicit
modelling of this component yields proportionally larger gains.

\section{Conclusion}
\label{se: Conclusion}

We have introduced \textbf{Collaborative Entropy (CoE)}, an information-theoretic
metric for quantifying semantic uncertainty in agentic multi-LLM systems.
CoE decomposes system-level uncertainty into two interpretable components—intra-model
aleatoric uncertainty, captured by the mean semantic entropy across models, and
inter-model epistemic uncertainty, captured by the weighted asymmetric KL divergence
from each model's cluster distribution to the ensemble mean—and provides three formal
guarantees: Non-Negativity, Zero Value Certainty, and a characterisation of the partial
reductions achievable through per-model entropy maximisation.
These properties clarify precisely when individual-model certainty is sufficient for
system-level certainty, and when residual inter-model disagreement persists as
irreducible epistemic divergence.

Experiments on TriviaQA and SQuAD with three 7--8B instruction-tuned LLMs demonstrate
that CoE consistently outperforms single-model uncertainty baselines (semantic entropy,
regular entropy, $P_{\text{false}}$) and single-component ablations
($\mathcal{U}_{\mathcal{A}}$, $\mathcal{U}_{\mathcal{E}}$) in AUROC and AURAC, with
gains that scale as additional heterogeneous models are introduced (AUROC
$0.683 \to 0.772$ on TriviaQA; up to $0.878$ on SQuAD).
As a practical application of the metric, the CoE-guided coordination heuristic
achieves a $+39.0\%$ accuracy gain under asymmetric KL divergence, substantially
exceeding symmetric alternatives and validating the importance of reference-directed
epistemic measurement.

\paragraph{Limitations.}
We acknowledge two theoretical limitations inherent to the aleatoric/epistemic
decomposition adopted in CoE.
First, intra-model semantic entropy may conflate aleatoric and epistemic sources at
the single-model level: an LLM that lacks knowledge about a prompt may hallucinate
diverse but factually incorrect answers across samples, artificially inflating SE
beyond pure aleatoric uncertainty~\citep{shorinwa2025survey}.
This \emph{epistemic contamination of SE} is a known open problem in LLM uncertainty
quantification and is orthogonal to the multi-LLM contribution of CoE.
Second, inter-model KL divergence may capture superficial stylistic differences
between models—such as verbosity or formatting preferences induced by differing RLHF
procedures—rather than genuine epistemic disagreement about the correct answer.
Future work could incorporate a semantic normalisation step (e.g., extracting canonical
answer spans before clustering) to reduce the influence of stylistic variation on
$\mathcal{U}_{\mathcal{E}}$.

\paragraph{Future directions.}
Several extensions of CoE are natural.
Scaling to larger models (e.g., 70B parameter LLMs) and to closed-source APIs—where
token-level probabilities may be unavailable and generation frequency must serve as a
proxy for cluster probability—would broaden the applicability of the metric.
Incorporating verbalized confidence~\citep{rivera2024combining} or
log-probability-based signals as additional uncertainty sources could further enrich
the decomposition.
Finally, applying CoE to sequential multi-agent settings, where models interact across
multiple rounds, would test whether the metric's two-component structure remains
informative when uncertainty accumulates over a reasoning trajectory.

\section{Acknowledgment}
This work is supported by the National Natural Science Foundation of China (Project 62271434, 62501397), Shenzhen Key Laboratory of Crowd Intelligence Empowered Low-Carbon Energy Networks (No. ZDSYS20220606100601002), the Shenzhen Stability Science Program 2023, Shenzhen Loop Area Institute, and Shenzhen Institute of Artificial Intelligence and Robotics for Society.

\bibliography{iclr2026_conference}
\bibliographystyle{iclr2026_conference}

\newpage
\appendix
\onecolumn

\section{Theoretical Analysis of CoE and a CoE-Guided Coordination Heuristic}
\label{se: Collaborative-Entropy-based Reduction optimization and Analysis}

We first establish three formal properties of CoE as an uncertainty metric
(Sec.~\ref{se: CoE analysis}), then present a lightweight coordination heuristic
motivated by these properties (Sec.~\ref{se: CoE-Reduction Optimization Algorithm}). 

\subsection{Theoretical Properties of CoE}
\label{se: CoE analysis}

\begin{theorem}[Non-Negativity]
\label{th: non-negativity}
For any input $x$ and model set
$\mathcal{M} = \{\mathcal{M}_1, \dots, \mathcal{M}_K\}$,
$\mathcal{U}_{CoE}(\mathcal{K}) \geq 0$.
\end{theorem}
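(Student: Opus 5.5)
The plan is to prove non-negativity term by term. By Definition~\ref{de: CoE}, $\mathcal{U}_{CoE}(\mathcal{K}) = \mathcal{U}_{\mathcal{A}}(\mathcal{K}) + \mathcal{U}_{\mathcal{E}}(\mathcal{K})$. It therefore suffices to show that each of the two components is non-negative. No cancellation between them needs to be controlled.

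\textbf{Aleatoric term.} For each model $\mathcal{M}_i$, the cluster probabilities $p_i(c_j\mid x)$ lie in $[0,1]$. Hence $-\log p_i(c_j \mid x) \ge 0$, with the usual convention $0\log 0 = 0$. Each summand of $SE(x_i)$ in equation~\ref{eq: SE} is therefore non-negative, so $SE(x_i)\ge 0$. Averaging over $i$ with the factor $1/K>0$ gives $\mathcal{U}_{\mathcal{A}}\ge 0$.

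One small point needs care here. The definition builds $p(c_j\mid x)$ from length-normalized sequence probabilities, so I will assume, as is implicit in calling $p_i(\cdot\mid x)$ a cluster distribution, that these values are normalized to sum to one over the clusters. Under that assumption each value lies in $[0,1]$ and the argument above applies.

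\textbf{Epistemic term.} The weights satisfy $w_i\ge 0$ and $\sum_i w_i = 1$, so $\bar p = \sum_i w_i p_i$ is itself a probability distribution on $\{c_1,\dots,c_l\}$. Moreover, $p_i(c_j)>0$ with $w_i>0$ implies $\bar p(c_j)\ge w_i p_i(c_j) >0$. So every KL term with positive weight is finite, since $p_i$ is absolutely continuous with respect to $\bar p$. Terms with $w_i=0$ contribute $0\cdot\mathcal{D}_{\mathrm{KL}}$, which I set to zero by convention; alternatively one can restrict to models with $w_i>0$.

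Next I apply Gibbs' inequality (nonnegativity of KL). By Jensen's inequality for the concave $\log$,
\[
-\mathcal{D}_{\mathrm{KL}}(p_i\|\bar p) = \sum_{j:\,p_i(c_j)>0} p_i(c_j)\log\frac{\bar p(c_j)}{p_i(c_j)} \le \log\sum_{j:\,p_i(c_j)>0}\bar p(c_j) \le \log 1 = 0.
\]
Thus each $\mathcal{D}_{\mathrm{KL}}(p_i\|\bar p)\ge 0$. A non-negative weighted sum of these terms gives $\mathcal{U}_{\mathcal{E}}\ge 0$.

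An alternative route is to recognize $\mathcal{U}_{\mathcal{E}}$ as the generalized Jensen--Shannon divergence $H(\bar p)-\sum_i w_i H(p_i)$. Its non-negativity then follows from concavity of entropy. I will mention this only as a cross-check.

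\textbf{Conclusion and main obstacle.} Adding the two bounds yields $\mathcal{U}_{CoE}(\mathcal{K})\ge 0$. The argument is elementary. The only step needing real care is the well-definedness of the KL terms, namely the support argument showing $\bar p(c_j)>0$ wherever $p_i(c_j)>0$ and $w_i>0$, together with the $0\log 0$ and zero-weight conventions. Without this, $\mathcal{D}_{\mathrm{KL}}$ could be $+\infty$, or an expression like $0\cdot\infty$ could appear.
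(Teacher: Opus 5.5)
Your proposal is correct and follows the paper's proof: you split $\mathcal{U}_{CoE}$ into $\mathcal{U}_{\mathcal{A}}$ and $\mathcal{U}_{\mathcal{E}}$, use non-negativity of Shannon entropy for the first, and use non-negativity of KL divergence (Gibbs' inequality) together with $w_i \ge 0$ for the second. Your extra care (normalized cluster probabilities, the support argument giving $\bar p(c_j)>0$ wherever $w_i p_i(c_j)>0$, and the zero-weight convention) fills in details the paper leaves implicit without changing the argument.
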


\begin{proof}[Proof of Theorem~\ref{th: non-negativity} (Non-Negativity)]
The aleatoric term $\mathcal{U}_{\mathcal{A}} = \frac{1}{K}\sum_{i}SE_i(x) \geq 0$
follows immediately from the non-negativity of Shannon entropy: for any distribution
over $l$ clusters, $-\sum_j p(c_j)\log p(c_j) \geq 0$, since
$p(c_j)\in[0,1]$ and $u\log u \leq 0$ for $u\in(0,1]$.

The epistemic term satisfies
\begin{equation}
    \sum_{i=1}^{K} w_i\,\mathcal{D}_{\mathrm{KL}}(p_i \| \bar{p})
    \;=\;
    \sum_{i=1}^{K} w_i \sum_{j=1}^{l}
    p_i(c_j\mid x)\log\frac{p_i(c_j\mid x)}{\bar{p}(c_j\mid x)}
    \;\geq\; 0,
    \label{eq: proof1}
\end{equation}
by the non-negativity of KL divergence, with equality iff
$p_i(\cdot\mid x)=\bar{p}(\cdot\mid x)$ for all $i$.
Since $w_i\geq 0$ and $\sum_i w_i=1$, the weighted sum is also non-negative.
As $\mathcal{U}_{CoE}$ is the sum of two non-negative terms,
$\mathcal{U}_{CoE}(\mathcal{K})\geq 0$.
\end{proof}

\begin{theorem}[Zero Value Certainty]
\label{th: zero value certainty}
$\mathcal{U}_{CoE}(\mathcal{K}) = 0$ if and only if there exists a single semantic
cluster $c^*$ such that $p_i(c^* \mid x) = 1$ for all $i \in \{1, \dots, K\}$,
i.e., every model places all its probability mass on the same semantic cluster.
\end{theorem}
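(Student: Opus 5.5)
Both directions follow by reusing the decomposition from the proof of Theorem~\ref{th: non-negativity}: $\mathcal{U}_{CoE}=\mathcal{U}_{\mathcal{A}}+\mathcal{U}_{\mathcal{E}}$ with each summand non-negative. A sum of two non-negative terms vanishes iff both vanish, so I will characterise $\mathcal{U}_{\mathcal{A}}=0$ and $\mathcal{U}_{\mathcal{E}}=0$ separately and then intersect the two conditions.

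\emph{Sufficiency.} Suppose some $c^*$ satisfies $p_i(c^*\mid x)=1$ for all $i$. Each $p_i$ is then the point mass $\delta_{c^*}$. Its semantic entropy is $-1\cdot\log 1=0$, with the convention $0\log 0=0$, so $\mathcal{U}_{\mathcal{A}}=0$. Moreover, $\bar p=\sum_i w_i\delta_{c^*}=\delta_{c^*}$ because $\sum_i w_i=1$. Hence every $\mathcal{D}_{\mathrm{KL}}(p_i\|\bar p)=0$, and so $\mathcal{U}_{\mathcal{E}}=0$.

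\emph{Necessity.} Suppose $\mathcal{U}_{CoE}=0$. Then $\mathcal{U}_{\mathcal{A}}=\frac1K\sum_i SE_i=0$. Since every $SE_i\ge 0$ and the weight $1/K$ is strictly positive, each $SE_i=0$. Every term $-p\log p$ is non-negative and vanishes only at $p\in\{0,1\}$, so each $p_i$ puts mass $1$ on a single cluster $c_i^*$. This step alone already pins down each model, and it does so without using the weights $w_i$.

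It remains to show $c_i^*=c^*$ for a common $c^*$, and this is where the only subtlety arises. The epistemic term is weighted by $w_i$, so a model with $w_i=0$ is unconstrained by $\mathcal{U}_{\mathcal{E}}=0$. For example, take $K=2$, $w=(1,0)$ and $p_1=\delta_{c_1}$, $p_2=\delta_{c_2}$. Then $\bar p=\delta_{c_1}$ and $\mathcal{U}_{\mathcal{E}}=1\cdot 0+0\cdot\infty=0$ under the standard convention $0\cdot\infty=0$. So the theorem needs the additional hypothesis $w_i>0$ for all $i$, or at least should be read that way; I would state this explicitly.

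Under $w_i>0$, $\mathcal{U}_{\mathcal{E}}=0$ forces $\mathcal{D}_{\mathrm{KL}}(p_i\|\bar p)=0$ for every $i$. By Gibbs' inequality this gives $p_i=\bar p$ for all $i$, so all $p_i$ are equal. Combined with $p_i=\delta_{c_i^*}$, all the $c_i^*$ coincide, and the common value is the required $c^*$.

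I expect the main obstacle to be exactly this weight issue. A secondary point is the possibility that $\bar p(c_j)=0$ while $p_i(c_j)>0$. That cannot happen when all weights are positive, since $\bar p\ge w_ip_i$, so the KL terms are finite and the equality case of Gibbs' inequality applies cleanly.
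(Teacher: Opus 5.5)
Your argument is correct and is essentially the paper's proof: both directions go through the split into two non-negative terms, with $\mathcal{U}_{\mathcal{A}}=0$ forcing every $p_i$ to be a Delta distribution and $\mathcal{U}_{\mathcal{E}}=0$ forcing $p_i=\bar{p}$ for all $i$. Keep your caveat about the weights. The paper's converse step (``$\mathcal{U}_{\mathcal{E}}=0$ then requires $\mathcal{D}_{\mathrm{KL}}(p_i\|\bar{p})=0$ for all $i$'') silently uses $w_i>0$, and your $w=(1,0)$ example shows the ``only if'' direction fails for merely non-negative weights under the standard convention $0\cdot\infty=0$.
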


\begin{proof}[Proof of Theorem~\ref{th: zero value certainty} (Zero Value Certainty)]
Suppose $p_i(c\mid x)=1$ for all $i$ and some fixed cluster $c$.
The ensemble mean then satisfies $\bar{p}(c\mid x)=\sum_i w_i\cdot 1=1$
and $\bar{p}(c_j\mid x)=0$ for all $j\neq c$.
The aleatoric term reduces to
\[
    -\sum_{j=1}^{l}\bar{p}(c_j\mid x)\log\bar{p}(c_j\mid x)
    \;=\; -1\cdot\log 1 \;-\; \sum_{j\neq c} 0
    \;=\; 0.
\]
For the epistemic term, since $p_i(c\mid x)=\bar{p}(c\mid x)=1$ and
$p_i(c_j\mid x)=\bar{p}(c_j\mid x)=0$ for $j\neq c$,
\[
    \mathcal{D}_{\mathrm{KL}}(p_i\|\bar{p})
    = 1\cdot\log\tfrac{1}{1} + \sum_{j\neq c} 0
    = 0 \quad \forall\,i,
\]
so $\sum_i w_i\,\mathcal{D}_{\mathrm{KL}}(p_i\|\bar{p})=0$.
Hence $\mathcal{U}_{CoE}(\mathcal{K})=0$.

Conversely, if $\mathcal{U}_{CoE}=0$, then both terms vanish.
$\mathcal{U}_{\mathcal{A}}=0$ requires each $SE_i(x)=0$,
i.e., every $p_i$ is a Delta distribution.
$\mathcal{U}_{\mathcal{E}}=0$ then requires
$\mathcal{D}_{\mathrm{KL}}(p_i\|\bar{p})=0$ for all $i$,
i.e., $p_i=\bar{p}$ for all $i$, which holds iff all Delta
distributions concentrate on the same cluster.
\end{proof}

\begin{remark}
Theorem~\ref{th: zero value certainty} makes precise the ideal state of a multi-LLM
system: CoE reaches its global minimum of zero only when all models are simultaneously
internally certain \emph{and} mutually consistent. Neither condition alone is sufficient.
\end{remark}

To characterise the partial reductions of CoE achievable through per-model optimisation,
we introduce the following:

\begin{definition}[Delta Distribution]
\label{def: delta}
The cluster distribution $p_i(\cdot \mid x)$ of model $\mathcal{M}_i$ is called a
\emph{Delta distribution}, written $p_i^*(\cdot \mid x)$, if there exists a unique
cluster $c_i^*$ such that $p_i(c_i^* \mid x) = 1$ and $p_i(c_j \mid x) = 0$ for all
$j \neq c_i^*$.
\end{definition}

\begin{remark}
$p_i^*$ is the unique global maximiser of $-SE_i(x)$ over the probability simplex
$\Delta^{l-1}$, since $SE_i(x) \geq 0$ with equality if and only if $p_i$ is a Delta
distribution~\citep{shannon1948mathematical}. Consequently, driving model $\mathcal{M}_i$
toward $p_i^*$ eliminates its contribution to $\mathcal{U}_{\mathcal{A}}$ entirely, but
does not by itself control $\mathcal{U}_{\mathcal{E}}$.
\end{remark}

\begin{theorem}[Delta-Distribution Behaviour]
\label{th: single-LLM negative entropy maximization}
Let $\mathcal{M} = \{\mathcal{M}_1, \dots, \mathcal{M}_K\}$ with fixed non-negative
weights $\{w_i\}$ satisfying $\sum_{i=1}^{K} w_i = 1$, and let each
$p_i^*(c_j \mid x)$ be the Delta distribution of $\mathcal{M}_i$ concentrating on
cluster $c_{j_i^*}$. Suppose $p_i = p_i^*$ for all $i$. Then:
\begin{enumerate}
    \item \textbf{Aleatoric minimum.}
    $\mathcal{U}_{\mathcal{A}} = 0$.

    \item \textbf{Case A — Consensus.}
    If $j_1^* = j_2^* = \cdots = j_K^* =: j^*$, then
    $\mathcal{U}_{\mathcal{E}} = 0$ and $\mathcal{U}_{CoE}(\mathcal{K}) = 0$,
    the global minimum of Theorem~\ref{th: zero value certainty}.

    \item \textbf{Case B — Residual Disagreement.}
    If $\exists\, i \neq k$ such that $j_i^* \neq j_k^*$, then
    $\mathcal{U}_{\mathcal{E}} > 0$ and
    \begin{equation}
        \mathcal{U}_{CoE}(\mathcal{K})
        \;=\; \mathcal{U}_{\mathcal{E}}
        \;=\; \sum_{i=1}^{K} w_i \cdot
        \mathcal{D}_{\mathrm{KL}}\!\left(p_i^* \;\|\; \bar{p}^*\right),
        \label{eq: caseB}
    \end{equation}
    where $\bar{p}^*(c_j \mid x) = \sum_i w_i\,\mathbf{1}[j = j_i^*]$.
\end{enumerate}
\end{theorem}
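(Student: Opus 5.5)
Each of the three claims can be checked by direct substitution of the Delta distributions into Definition~\ref{de: CoE}, using the convention $0\log 0 = 0$ and the Shannon-entropy facts already invoked in the proofs of Theorems~\ref{th: non-negativity} and~\ref{th: zero value certainty}.

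\textbf{Aleatoric minimum.} Since $p_i = p_i^*$, each $SE(x_i) = -1\cdot\log 1 - \sum_{j\neq j_i^*} 0\log 0 = 0$. Averaging over $i$ gives $\mathcal{U}_{\mathcal{A}}=0$. This also follows from the Remark after Definition~\ref{def: delta}.

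\textbf{Case A.} If all $j_i^*$ equal $j^*$, then $\bar p^*(c_{j^*}\mid x)=\sum_i w_i = 1$, so $\bar p^* = p_i^*$ for every $i$. Each KL term is therefore $1\cdot\log(1/1)=0$, so $\mathcal{U}_{\mathcal{E}}=0$ and $\mathcal{U}_{CoE}=0$. Alternatively, invoke the ``if'' direction of Theorem~\ref{th: zero value certainty} directly.

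\textbf{Case B.} Part~(i) already gives $\mathcal{U}_{CoE}=\mathcal{U}_{\mathcal{E}}$, and $\bar p^*(c_j\mid x)=\sum_i w_i\mathbf 1[j=j_i^*]$ holds by linearity. It remains to show strict positivity. First compute explicitly
\[
\mathcal{D}_{\mathrm{KL}}(p_i^*\|\bar p^*) = \log\frac{1}{\bar p^*(c_{j_i^*}\mid x)}.
\]
This quantity is finite whenever $w_i>0$, because then $\bar p^*(c_{j_i^*})\ge w_i>0$. Summing gives
\[
\mathcal{U}_{\mathcal{E}} = -\sum_i w_i\log \bar p^*(c_{j_i^*}\mid x),
\]
which is exactly the Shannon entropy of the weighted vote distribution $\bar p^*$. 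This entropy is positive if and only if $\bar p^*$ is not a point mass.

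\textbf{Main obstacle: the zero-weight case.} Positivity needs at least two clusters to receive positive weight. The hypothesis ``$\exists\, i\neq k$ with $j_i^*\neq j_k^*$'' does not guarantee this if some weights are zero. For example, if $w_k=0$, then model $k$ contributes nothing, $\bar p^*$ may be a point mass, and $\mathcal{U}_{\mathcal{E}}=0$. The positivity claim is therefore false as literally stated with merely non-negative weights.

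I would handle this by proving the result under the natural reading that the disagreeing models have positive weight: $w_i,w_k>0$, or simply all $w_i>0$. Under that assumption, $\bar p^*(c_{j_i^*})\le 1-w_k<1$, so the $i$-th term is $w_i\log(1/\bar p^*(c_{j_i^*}))>0$. All other terms are non-negative by Theorem~\ref{th: non-negativity}, so $\mathcal{U}_{\mathcal{E}}>0$.

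An equivalent route uses the equality condition in the proof of Theorem~\ref{th: non-negativity}: $\mathcal{U}_{\mathcal{E}}=0$ forces $p_i^*=\bar p^*$ for every $i$ with $w_i>0$, which is impossible when two positively weighted Delta distributions differ. I would state the positive-weight assumption explicitly in the proof.
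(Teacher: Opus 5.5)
Your proposal is correct and follows the same direct-substitution route as the paper, which likewise gets strict positivity in Case B by writing the KL term of one disagreeing model as $-\log \bar p^*(c_1^*\mid x)$ and showing that this mass is below $1$. The zero-weight obstacle you raise is genuine: the paper's proof silently asserts $\bar p^*(c_1^*\mid x)=w_1>0$ and $\bar p^*(c_2^*\mid x)=w_2>0$, so it implicitly uses your positive-weight assumption. Separately, its equality $\bar p^*(c_1^*\mid x)=w_1$ should read $\ge w_1$ when other models share $c_1^*$, a case your bound $\bar p^*(c_{j_i^*}\mid x)\le 1-w_k<1$ handles correctly.
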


\begin{proof}[Proof of Theorem~\ref{th: single-LLM negative entropy maximization}
(Delta-Distribution Behaviour)]

\noindent\textbf{Part (i): $\mathcal{U}_{\mathcal{A}}=0$.}
Shannon entropy satisfies $SE_i(x)\geq 0$ for any distribution over $l$ clusters,
with equality iff $p_i$ is a Delta distribution (Definition~\ref{def: delta}).
When every $p_i=p_i^*$ (a Delta distribution),
$SE_i(x)=0$ for all $i$, so $\mathcal{U}_{\mathcal{A}}
=\frac{1}{K}\sum_i SE_i(x)=0$.

\noindent\textbf{Part (ii): Case A (Consensus).}
Suppose $j_1^*=\cdots=j_K^*=j^*$, i.e., all Delta distributions concentrate on $c_{j^*}$.
The ensemble mean satisfies $\bar{p}^*(c_{j^*}\mid x)=\sum_i w_i\cdot 1=1$
and $\bar{p}^*(c_j\mid x)=0$ for $j\neq j^*$. For each model,
\[
    \mathcal{D}_{\mathrm{KL}}(p_i^*\|\bar{p}^*)
    = 1\cdot\log\frac{1}{1} = 0,
\]
so $\mathcal{U}_{\mathcal{E}}=\sum_i w_i\cdot 0=0$ and
$\mathcal{U}_{CoE}=0$, consistent with Theorem~\ref{th: zero value certainty}.

\noindent\textbf{Part (iii): Case B (Residual Disagreement).}
Suppose $\exists\,i\neq k$ with $j_i^*\neq j_k^*$.
Without loss of generality, let $p_1^*(c_1^*\mid x)=1$ and
$p_2^*(c_2^*\mid x)=1$ with $c_1^*\neq c_2^*$.
The ensemble mean satisfies $\bar{p}^*(c_1^*\mid x)=w_1>0$
and $\bar{p}^*(c_2^*\mid x)=w_2>0$.
For model $\mathcal{M}_1$,
\[
    \mathcal{D}_{\mathrm{KL}}(p_1^*\|\bar{p}^*)
    = 1\cdot\log\frac{1}{w_1} = -\log w_1 > 0,
\]
since $w_1<1$. Hence $\mathcal{U}_{\mathcal{E}}>0$, and
\[
    \mathcal{U}_{CoE}(\mathcal{K})
    = \underbrace{\mathcal{U}_{\mathcal{A}}}_{=\,0}
    + \mathcal{U}_{\mathcal{E}}
    = \sum_{i=1}^{K} w_i\,\mathcal{D}_{\mathrm{KL}}(p_i^*\|\bar{p}^*) > 0,
\]
where $\bar{p}^*(c_j\mid x)=\sum_i w_i\,\mathbf{1}[j=j_i^*]$.
This residual CoE is irreducible by per-model optimisation alone and
reflects genuine epistemic disagreement across heterogeneous models,
corresponding to Quadrant II in Figure~\ref{fig: eu-au}.
\end{proof}

\begin{remark}
Theorem~\ref{th: single-LLM negative entropy maximization} reveals a fundamental
asymmetry in how the two CoE components can be reduced. The aleatoric component
$\mathcal{U}_{\mathcal{A}}$ can always be driven to zero by per-model
negative-entropy maximisation, independently of what other models do. The epistemic
component $\mathcal{U}_{\mathcal{E}}$, however, cannot be controlled by any single model:
it vanishes only when \emph{all} models converge to the same cluster (Case A), and
persists otherwise as irreducible inter-model disagreement (Case B). This distinction
is invisible to single-model UQ metrics and motivates the decomposed structure of CoE.
Theorem~\ref{th: single-LLM negative entropy maximization} is stated over the semantic
cluster distribution induced by sampling and does not imply deterministic token-level
generation.
\end{remark}

\subsection{A CoE-Guided Post-Hoc Coordination Heuristic}
\label{se: CoE-Reduction Optimization Algorithm}

Theorems~\ref{th: non-negativity}--\ref{th: single-LLM negative entropy maximization}
characterise when and how CoE can be reduced. We now describe a simple, training-free,
post-hoc heuristic that operationalises this intuition at inference time. This procedure
is best understood as a practical application of the CoE metric rather than a standalone
algorithmic contribution: it uses CoE-derived signals to reweight the ensemble and refine
model coordination, without updating any model parameters. Algorithm~\ref{al: 1} gives the full procedure. We describe its two stages below.

\begin{algorithm}[ht]
\caption{CoE-Guided Coordination Heuristic}
\small
\begin{algorithmic}[1]
\REQUIRE Query $x$, model set $\mathcal{M} = \{\mathcal{M}_1, \dots, \mathcal{M}_K\}$,
         convergence threshold $\epsilon > 0$, maximum iterations $T_{\max}$
\ENSURE Per-model output distributions $\{p_1^*, \dots, p_K^*\}$,
        final CoE value $\mathcal{U}_{CoE}^*$

\STATE Set $t \leftarrow 0$
\STATE Initialise weights uniformly: $w_i^{(0)} \leftarrow \tfrac{1}{K}$ for all $i$
\FOR{$i = 1$ \TO $K$}
    \STATE Sample $m$ responses: $\mathcal{S}_i \leftarrow \{s_{i,1}, \dots, s_{i,m}\}$
    \STATE Cluster $\mathcal{S}_i$ via bidirectional entailment; compute $p_i^{(0)}$
           via Eq.~(\ref{eq: cluster prob})
\ENDFOR
\STATE Compute ensemble mean: $\bar{p}^{(0)} \leftarrow \sum_{i=1}^{K} w_i^{(0)} p_i^{(0)}$
\STATE Compute initial CoE: $\mathcal{U}_{CoE}^{(0)}$ via Eq.~(\ref{eq: objective})

\WHILE{$t < T_{\max}$}
    \FOR{$i = 1$ \TO $K$}
        \STATE \COMMENT{Step 2a: greedy Delta-distribution approximation}
        \STATE $j_i^* \leftarrow \arg\max_j\; p_i^{(t-1)}(c_j \mid x)$
        \STATE $p_i^{(t)}(c_j \mid x) \leftarrow \mathbf{1}[j = j_i^*]$
               \hfill\COMMENT{Eq.~(\ref{eq: greedy})}
        \STATE \COMMENT{Step 2b: entropy-proportional weight update}
        \STATE $\tilde{w}_i^{(t)} \leftarrow w_i^{(t-1)}
               \Bigl(1 + \sum_j p_i^{(t)}(c_j \mid x)\log p_i^{(t)}(c_j \mid x)\Bigr)$
    \ENDFOR
    \STATE Normalise: $w_i^{(t)} \leftarrow
           \tilde{w}_i^{(t)} \,/\, \sum_{k=1}^{K} \tilde{w}_k^{(t)}$
    \STATE Update ensemble mean: $\bar{p}^{(t)} \leftarrow \sum_{i=1}^{K} w_i^{(t)} p_i^{(t)}$
    \STATE Recompute: $\mathcal{U}_{CoE}^{(t)}$ via Eq.~(\ref{eq: objective})
    \IF{$\bigl|\mathcal{U}_{CoE}^{(t)} - \mathcal{U}_{CoE}^{(t-1)}\bigr| < \epsilon$}
        \STATE $p_i^* \leftarrow p_i^{(t)}$ for all $i$;\quad \textbf{break}
    \ENDIF
    \STATE $t \leftarrow t + 1$
\ENDWHILE
\STATE $\mathcal{U}_{CoE}^* \leftarrow \mathcal{U}_{CoE}^{(t)}$
\end{algorithmic}
\label{al: 1}
\end{algorithm}

\paragraph{Stage 1: Semantic clustering and initialisation (Lines 1--8).}
For each model $\mathcal{M}_i$, we draw $m$ candidate responses and partition them into
semantic clusters via bidirectional entailment scoring: two sequences $s_a$, $s_b$ are
assigned to the same cluster if each entails the other. The initial cluster probability
is:
\begin{equation}
\label{eq: cluster prob}
    p_i^{(0)}(c_j \mid x)
    \;=\; \frac{\displaystyle\sum_{s \in c_j} p(s \mid x)}
               {\displaystyle\sum_{k}\sum_{s \in c_k} p(s \mid x)},
\end{equation}
where $p(s \mid x)$ is the length-normalised generation log-probability of sequence $s$.
Ensemble weights are initialised uniformly: $w_i^{(0)} = 1/K$.

\paragraph{Stage 2: Iterative refinement (Lines 9--24).}

\noindent\textbf{Step 2a — Greedy Delta-distribution approximation (Lines 12--13).}
Motivated by Theorem~\ref{th: single-LLM negative entropy maximization} (i), each model's
distribution is updated toward its Delta distribution by concentrating all probability
mass on its currently most probable cluster:
\begin{equation}
\label{eq: greedy}
p_i^{(t)}(c_j \mid x) =
\begin{cases}
1 & \text{if } j = j_i^* := \arg\max_{j}\; p_i^{(t-1)}(c_j \mid x), \\
0 & \text{otherwise.}
\end{cases}
\end{equation}
This greedy step is $\mathcal{O}(l)$ per model per iteration and requires neither
gradient computation nor model retraining. Alternative solvers—gradient ascent on a
softmax-parameterised distribution, genetic algorithms, or
Bayesian optimisation over the cluster simplex—may be substituted at Line 11 when a
smoother optimisation trajectory is preferred.

\noindent\textbf{Step 2b — Entropy-proportional weight update (Lines 14--17).}
After updating the distributions, ensemble weights are adjusted to give higher influence
to models that have achieved greater semantic certainty:
\begin{equation}
\label{eq: w}
w_i^{(t)}
\;=\; \frac{
    w_i^{(t-1)}
    \Bigl(1 + \sum_j p_i^{(t)}(c_j \mid x)\log p_i^{(t)}(c_j \mid x)\Bigr)
}{
    \sum_{k=1}^{K} w_k^{(t-1)}
    \Bigl(1 + \sum_j p_k^{(t)}(c_j \mid x)\log p_k^{(t)}(c_j \mid x)\Bigr)
}.
\end{equation}
The numerator factor $1 + \sum_j p\log p$ equals $1 - SE_i^{(t)}$, which lies in
$[0, 1]$ and is maximised at $1$ when $p_i^{(t)}$ is a Delta distribution. The update
thus implements a soft version of majority-weight assignment: models that have already
converged to a confident cluster receive proportionally larger weight, pulling the
ensemble mean $\bar{p}^{(t)} = \sum_i w_i^{(t)} p_i^{(t)}$ toward the consensus.
The CoE is then recomputed via Eq.~(\ref{eq: objective}), and convergence is declared
when $|\mathcal{U}_{CoE}^{(t)} - \mathcal{U}_{CoE}^{(t-1)}| < \epsilon$.

\section{Complexity Analysis}
\label{se: Complexity Analysis}

The total computational cost of Algorithm~\ref{al: 1} decomposes into three stages.

\paragraph{Stage 1: Sequence generation.}
Generating $m$ candidate responses for each of the $\mathcal{K}$ models incurs
$\mathcal{O}\!\left(\sum_{i=1}^{\mathcal{K}}F(\mathcal{M}_i)\right)$,
where $F(\mathcal{M}_i)$ denotes the autoregressive inference cost of model $i$
(typically $\mathcal{O}(m\cdot L_{\max}\cdot d^2)$ for sequence length $L_{\max}$
and hidden dimension $d$). This is the dominant bottleneck.

\paragraph{Stage 2: Semantic clustering.}
Partitioning the $m\mathcal{K}$ responses into semantic clusters via pairwise bidirectional
entailment incurs $\mathcal{O}(\mathcal{K}\cdot\mathcal{C}_{\mathrm{cluster}})$, where
$\mathcal{C}_{\mathrm{cluster}}=\mathcal{O}(m^2\cdot C_{\mathrm{NLI}})$ and
$C_{\mathrm{NLI}}$ is the cost of a single NLI inference call.
In practice, $m$ is small ($m\leq 10$), so this term is modest relative to
Stage~1.

\paragraph{Stage 3: Iterative refinement.}
Each of the $T$ iterations performs a greedy Delta-distribution update
($\mathcal{O}(l)$ per model) and a CoE recomputation ($\mathcal{O}(\mathcal{K}\cdot l)$),
giving $\mathcal{O}(T\cdot \mathcal{K}\cdot l)$ in total.

The overall complexity is therefore:
\begin{equation}
    \mathcal{O}\!\left(
        \sum_{i=1}^{\mathcal{K}}F(\mathcal{M}_i)
        \;+\; \mathcal{K}\cdot\mathcal{C}_{\mathrm{cluster}}
        \;+\; T\cdot \mathcal{K}\cdot l
    \right).
    \label{eq: complexity}
\end{equation}

Since $l\leq 10$ and $T\leq 10$ in all our experiments, the $T\cdot \mathcal{K}\cdot l$ term
is negligible relative to the generation cost.
CoE thus adds minimal overhead to a standard multi-LLM inference pipeline:
Stages~2 and~3 are plug-and-play post-processing steps that operate on already-generated
outputs and do not require any model retraining or gradient computation.

\begin{table}[t]
\centering
\caption{The performance of different single LLMs on different dataset (200 samples)}
\label{tab:1}
\resizebox{0.5\linewidth}{!}{
\begin{tabular}{ccclcc} 
\toprule
\multirow{2}{*}{LLMs} & \multicolumn{2}{c}{\textit{TriviaQA} Dataset}                                                                            & \multicolumn{1}{c}{} & \multicolumn{2}{c}{\textit{SQuAD} Dataset}                                                                                \\ 
\cline{2-3}\cline{5-6}
                      & \begin{tabular}[c]{@{}c@{}}Train\\~Accuracy\end{tabular} & \begin{tabular}[c]{@{}c@{}}Validation \\Accuracy\end{tabular} &                      & \begin{tabular}[c]{@{}c@{}}Train \\Accuracy\end{tabular} & \begin{tabular}[c]{@{}c@{}}Validation \\Accuracy\end{tabular}  \\ 
\hline
Llama-3.1-8b-instruct & 0.65                                                     & 0.675                                                         &                      & 0.225                                                    & 0.1                                                            \\
Qwen-2.5-7b-instruct  & 0.6                                                      & 0.55                                                          &                      & 0.2                                                      & 0.15                                                           \\
Mistral-7b-instruct   & 0.375                                                    & 0.375                                                         &                      & 0.05                                                     & 0.075                                                          \\
\bottomrule
\end{tabular}}
\end{table}

\subsection{Detailed Experimental Results}
\label{sec: Experimental Results}

\textit{TriviaQA} is a large-scale reading comprehension dataset containing over 950K question-answer pairs authored by trivia enthusiasts and scraped from the Web, with distant supervision from 662K evidence documents. Its questions are designed to be challenging and compositional, requiring models to handle long, noisy contexts and complex reasoning. \textit{SQuAD} consists of 100K+ crowdsourced question-answer pairs on Wikipedia articles. The code will be released upon acceptance.

Table \ref{tab:1} presents the performance of three instruction-tuned large language models (Llama-3.1-8b-instruct, Qwen-2.5-7b-instruct, and Mistral-7b-instruct) on the \textit{TriviaQA} and \textit{SQuAD} machine reading comprehension datasets, evaluated using a 200-sample subset. Performance is quantified by accuracy on both training and validation splits, revealing that Llama-3.1-8b-instruct achieves the highest overall accuracy across both datasets: on \textit{TriviaQA}, it scores 0.65 on the training set and 0.675 on the validation set, while on \textit{SQuAD}, it attains 0.225 and 0.1 respectively. By contrast, Mistral-7b-instruct exhibits the lowest performance, with 0.375/0.375 accuracy on \textit{TriviaQA} and 0.05/0.075 on \textit{SQuAD}, and Qwen-2.5-7b-instruct yields intermediate results. These findings underscore the varying generalization capabilities of the models across different reading comprehension tasks, with Llama-3.1-8b-instruct demonstrating superior robustness, and all models showing a marked decline in accuracy when shifting from \textit{TriviaQA} to \textit{SQuAD}, suggesting dataset-specific task challenges.

\begin{figure*}[t]
    \centering
    \begin{minipage}{0.32\linewidth}
        \centering
        \centerline{
        \includegraphics[width=1\textwidth]{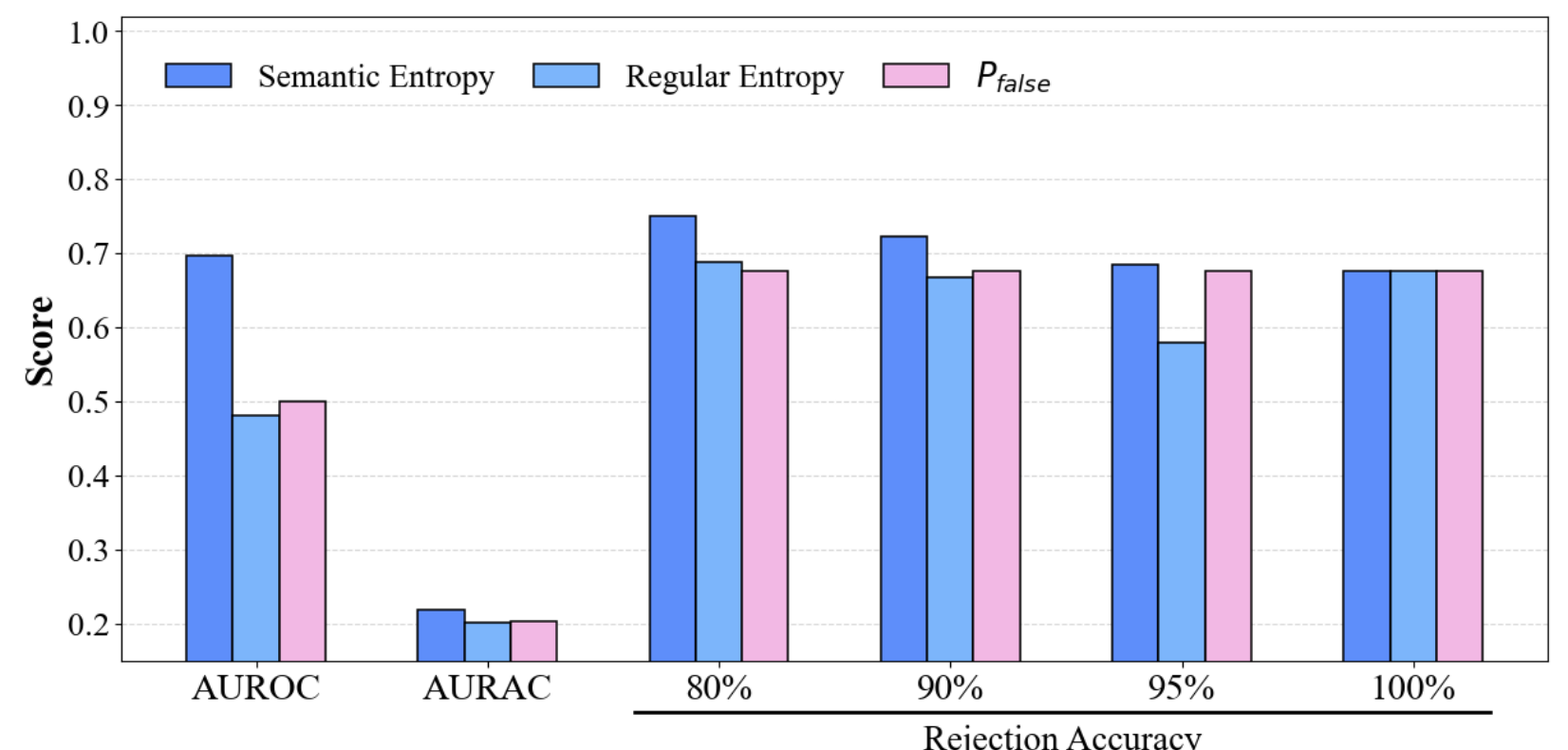}
        }
        \centerline{(\textit{a}) Llama}
    \end{minipage}
    \begin{minipage}{0.32\linewidth}
        \centering
        \centerline{
        \includegraphics[width=1\textwidth]{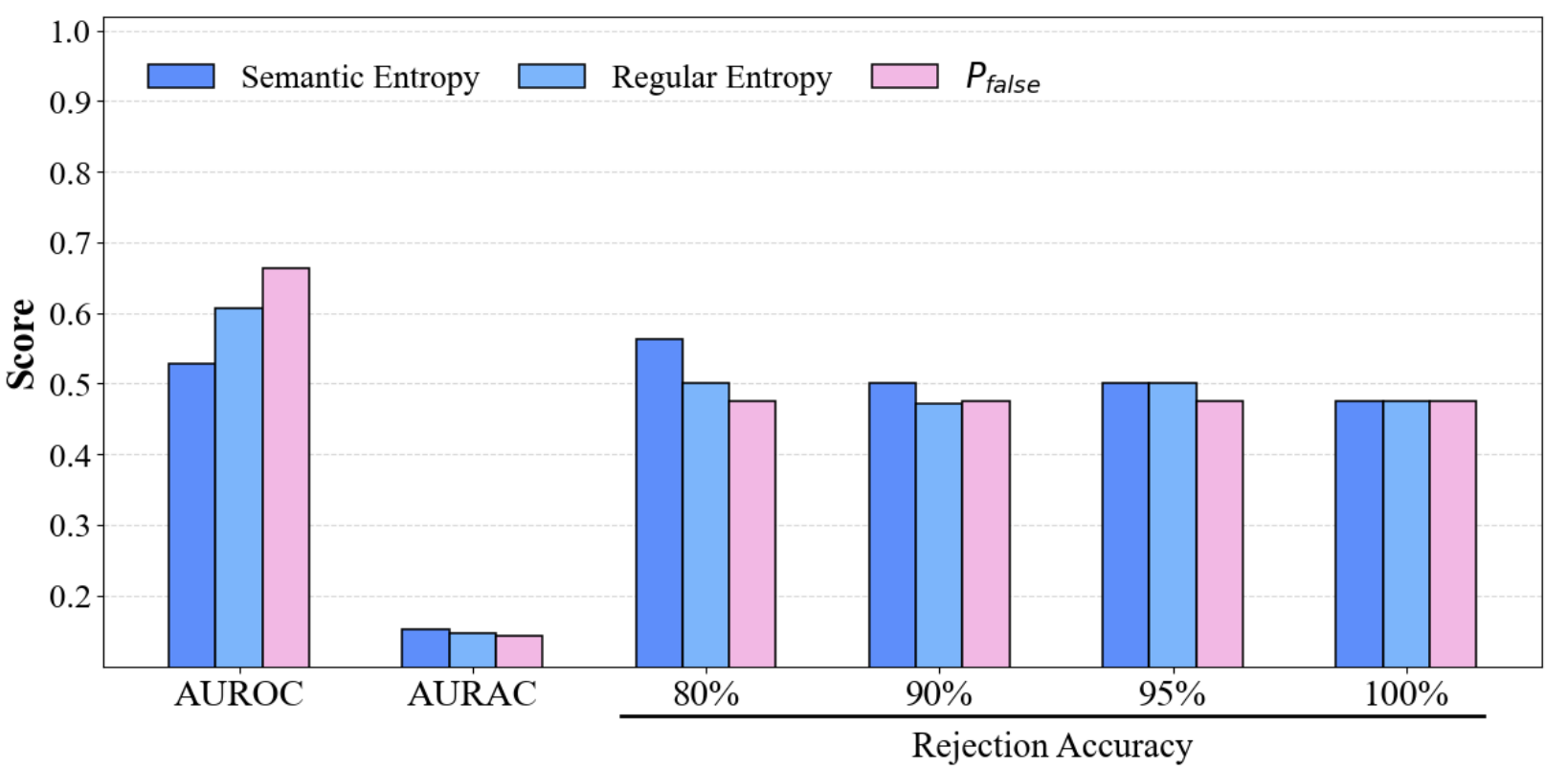}
        }
        \centerline{(\textit{b}) Qwen}
    \end{minipage}
    \begin{minipage}{0.32\linewidth}
        \centering
        \centerline{
        \includegraphics[width=1\textwidth]{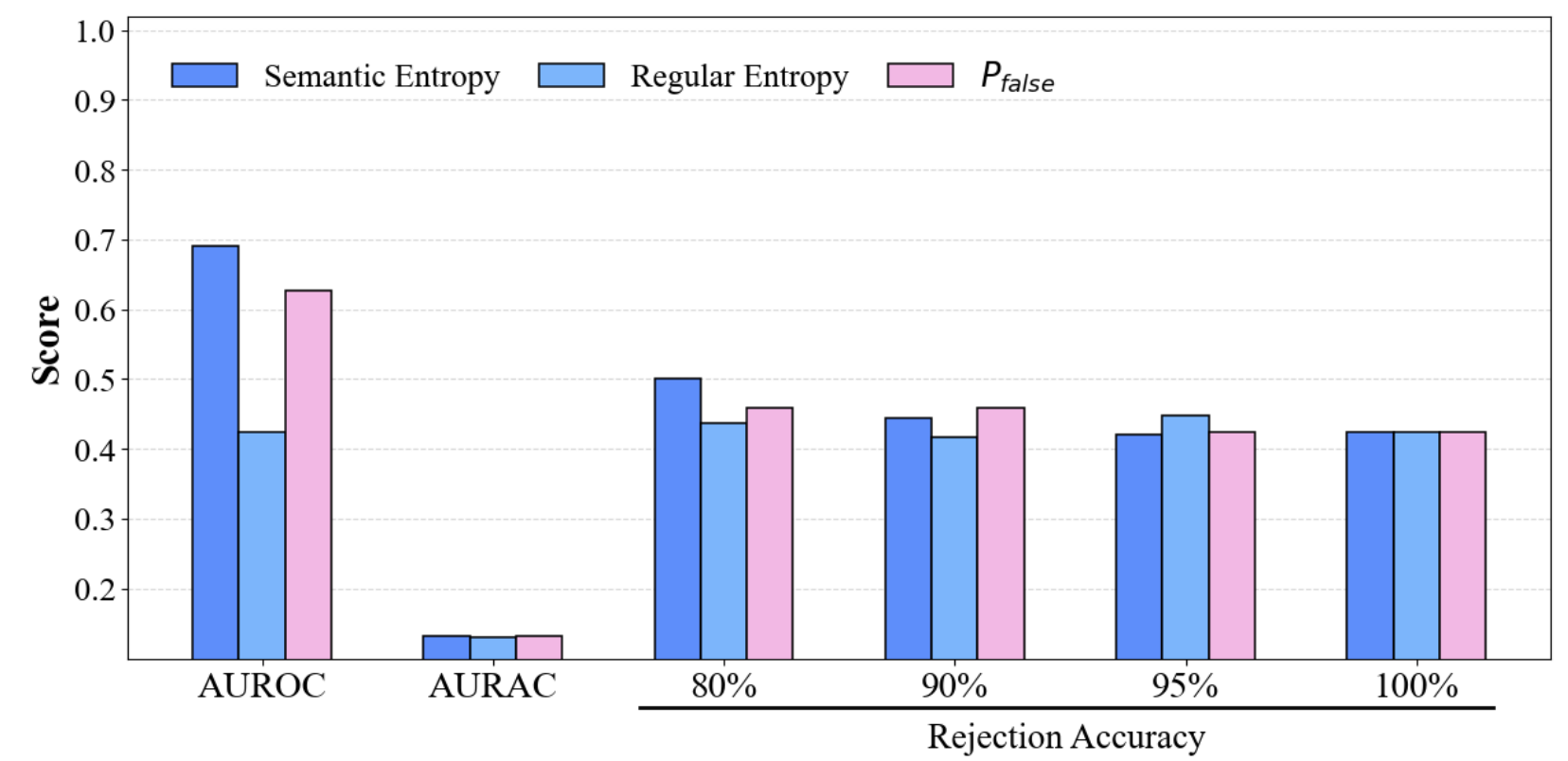}
        }
        \centerline{(\textit{c}) Mistral}
    \end{minipage}

    \begin{minipage}{0.32\linewidth}
        \centering
        \centerline{
        \includegraphics[width=1\textwidth]{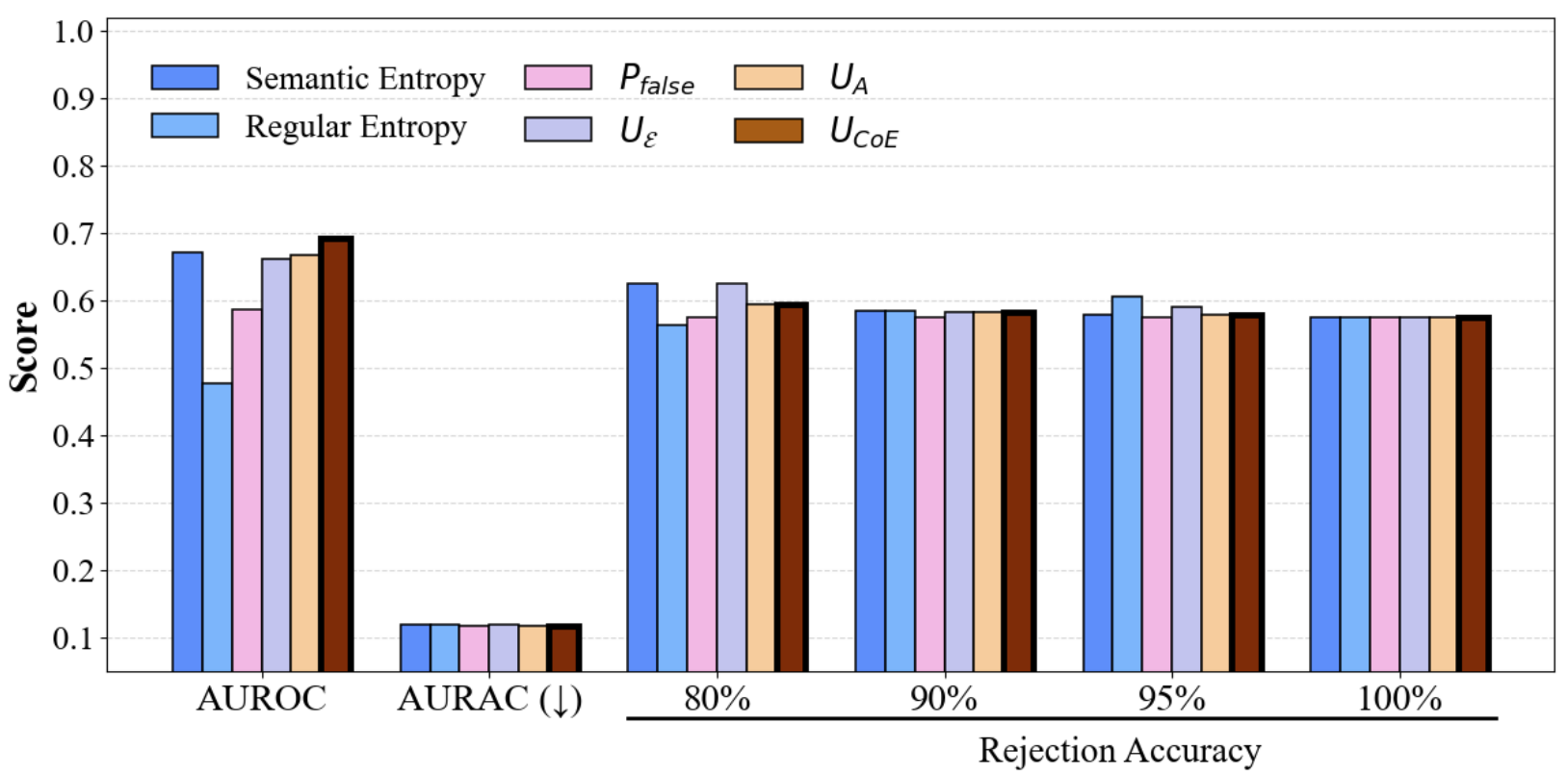}
        }
        \centerline{(\textit{d}) Llama + Qwen}
    \end{minipage}
    \begin{minipage}{0.32\linewidth}
        \centering
        \centerline{
        \includegraphics[width=1\textwidth]{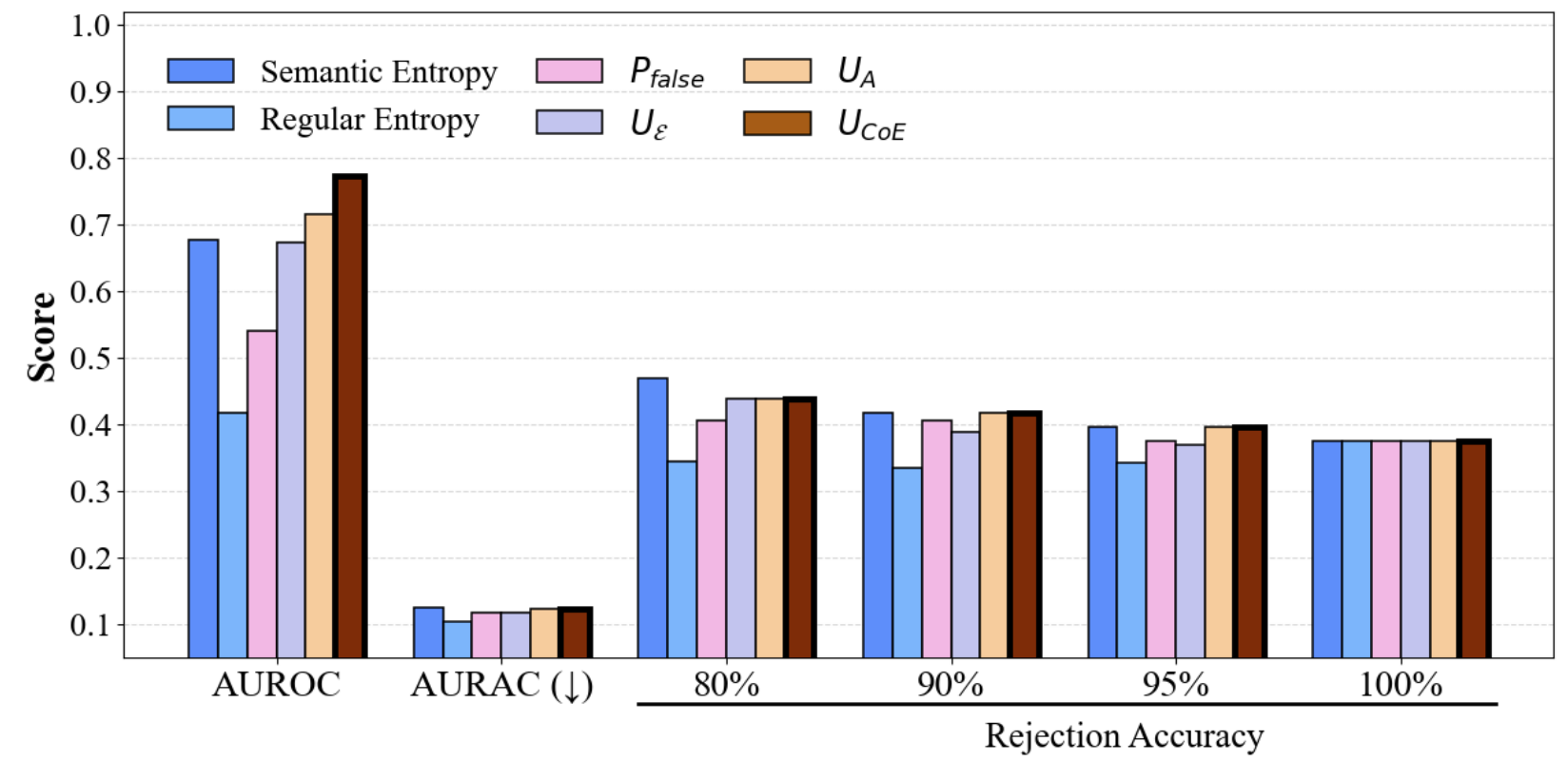}
        }
        \centerline{(\textit{e}) Llama + Qwen + Mistral}
    \end{minipage}
    \begin{minipage}{0.32\linewidth}
        \centering
        \centerline{
        \includegraphics[width=1\textwidth]{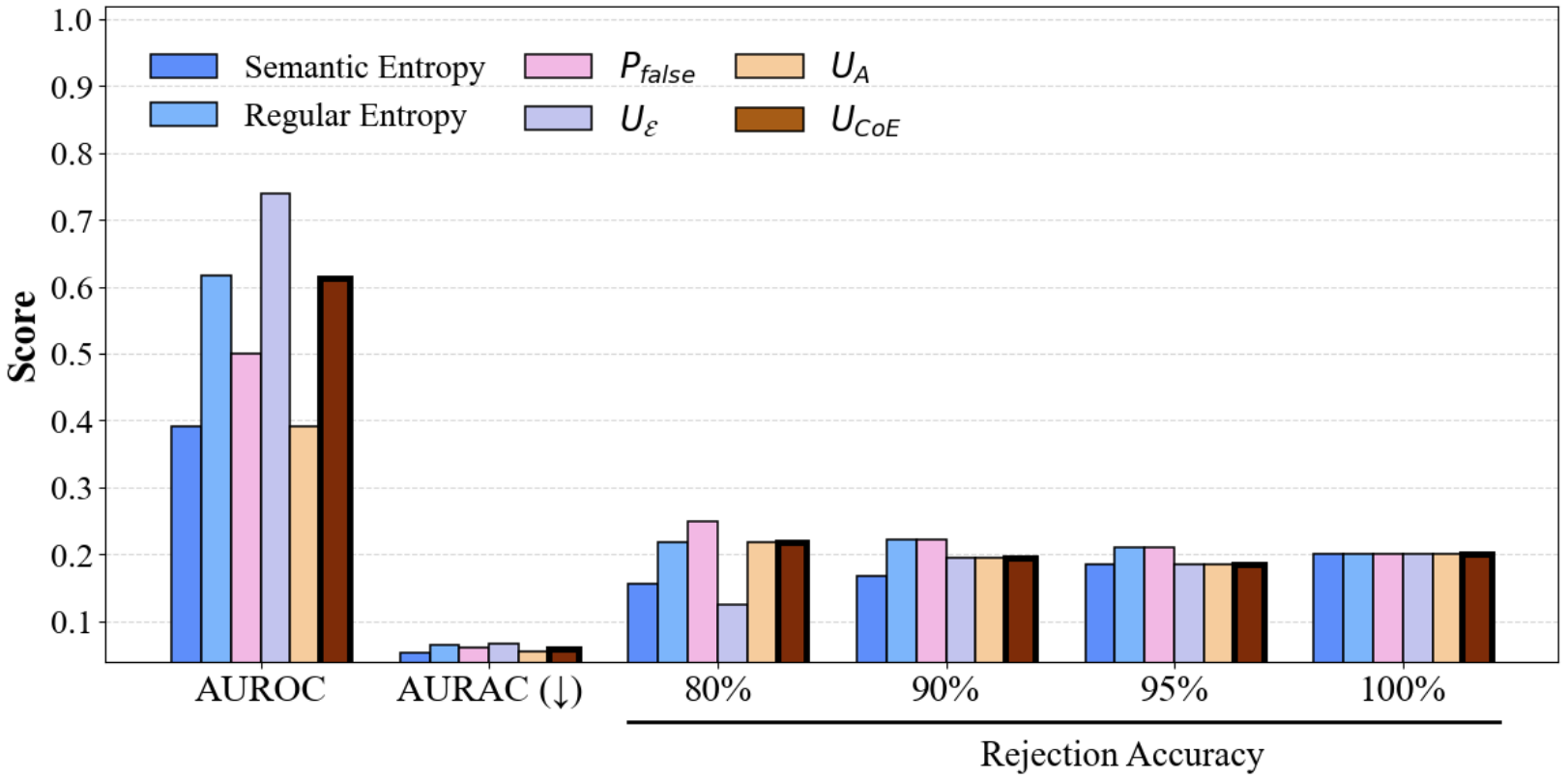}
        }
        \centerline{(\textit{f}) 2Llama + 2Qwen + 2Mistral}
    \end{minipage}
    \caption{Performance comparison of CoE with different divergence metrics on \textit{TriviaQA} dataset (200 samples, KL divergence).}
    \label{fig: 2}
\end{figure*}

As shown in Table~\ref{tab:2}, CoE consistently outperforms existing metrics across various ensemble configurations. In a two-LLMs' setup (Llama + Qwen), CoE achieves an AUROC of 0.683, significantly surpassing SE (0.670) and $\mathcal{U}_E$ (0.661). This superiority becomes more pronounced as the ensemble scales: with three models, CoE attains an AUROC of 0.772, a substantial gain over the strongest baseline $\mathcal{U}_E$ (0.716). These results, further visualized in Figure~\ref{fig: 2}, validate that CoE effectively harnesses complementary signals from heterogeneous models by jointly minimizing intra-model uncertainty and maximizing beneficial inter-model divergence.

\begin{figure*}[t]
    \centering
    \begin{minipage}{0.3\linewidth}
        \centering
        \centerline{
        \includegraphics[width=1\textwidth]{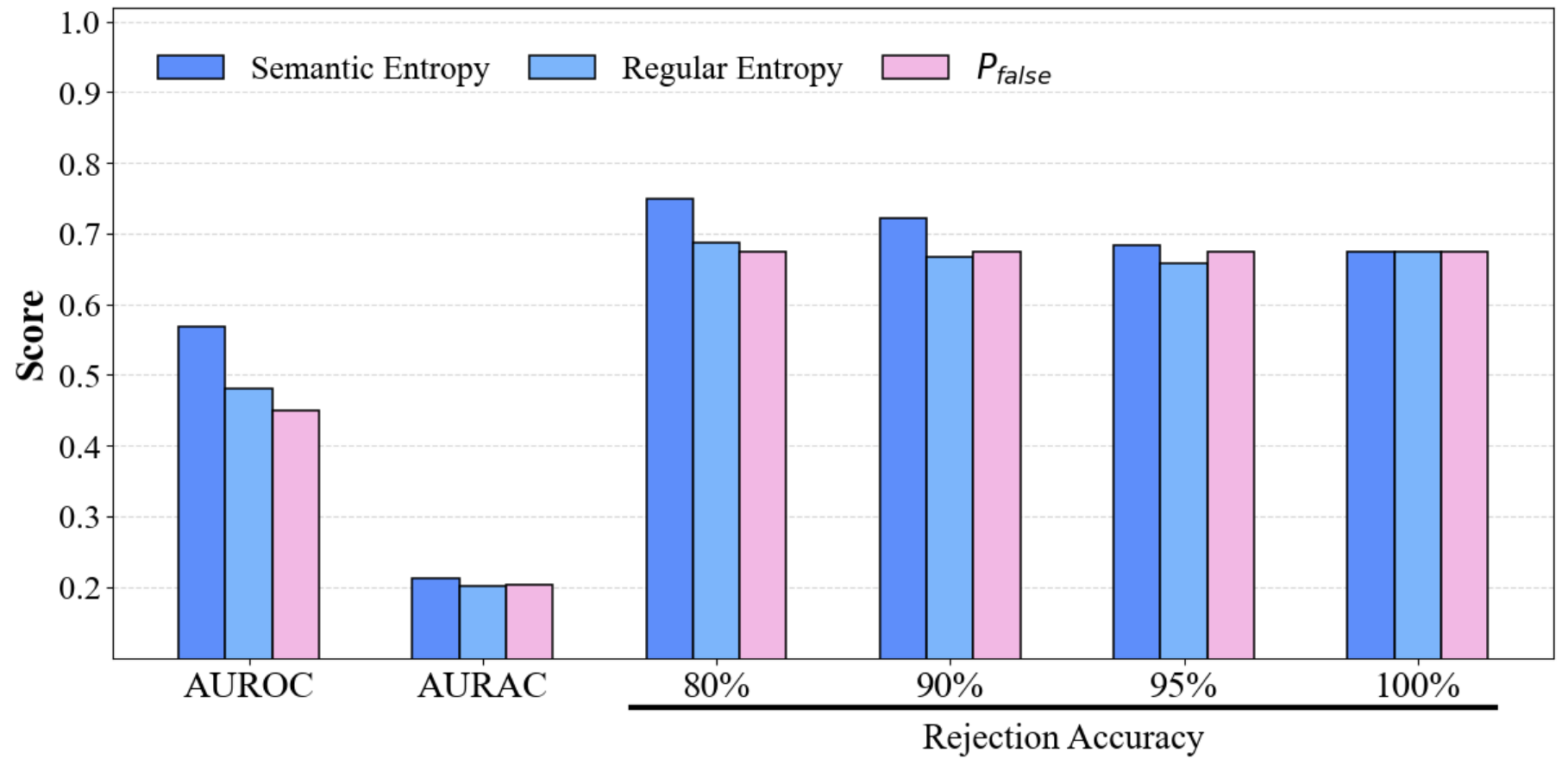}
        }
        \centerline{(\textit{a}) Llama}
    \end{minipage}
    \begin{minipage}{0.3\linewidth}
        \centering
        \centerline{
        \includegraphics[width=1\textwidth]{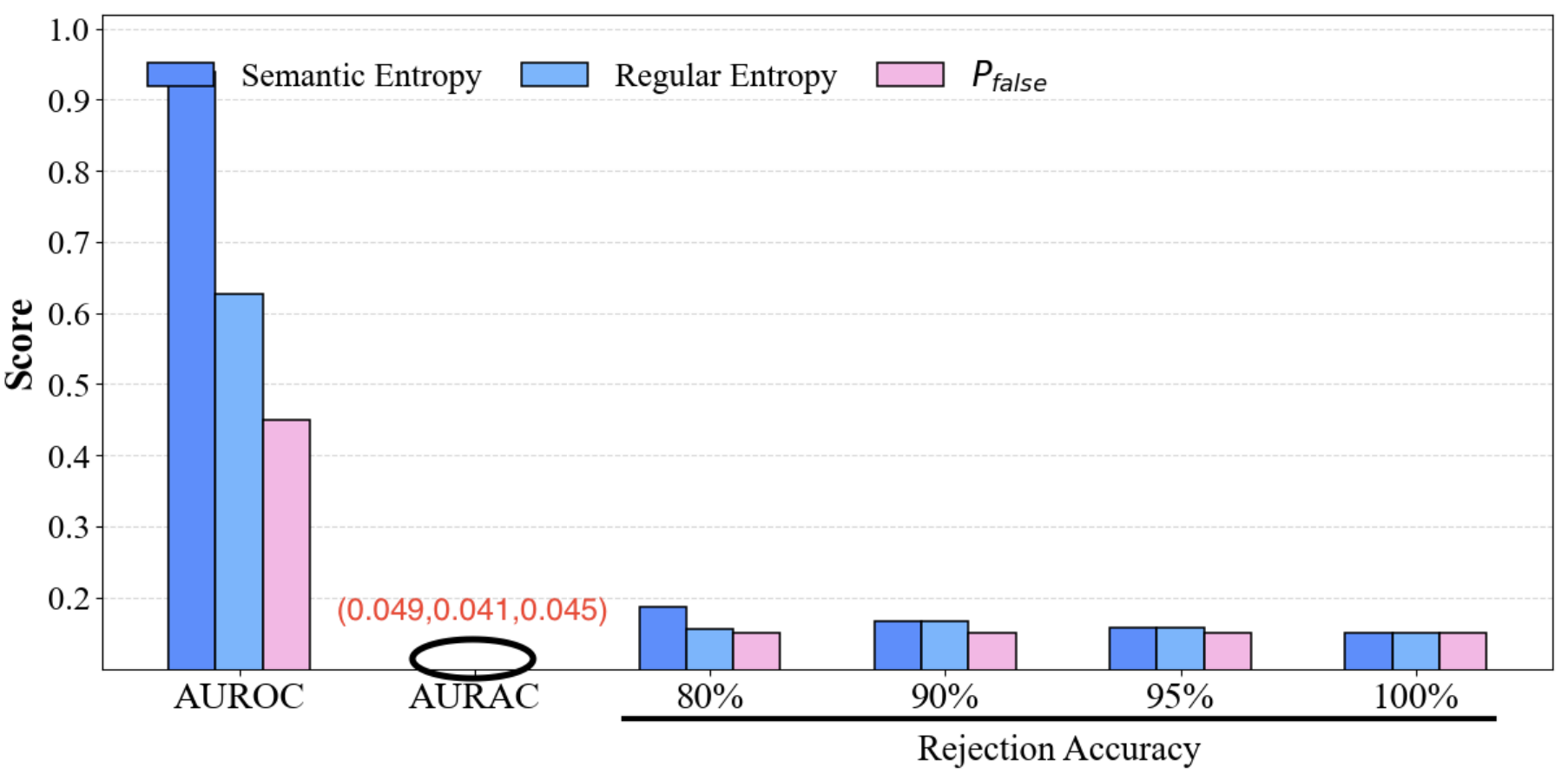}
        }
        \centerline{(\textit{b}) Qwen}
    \end{minipage}
    \begin{minipage}{0.3\linewidth}
        \centering
        \centerline{
        \includegraphics[width=1\textwidth]{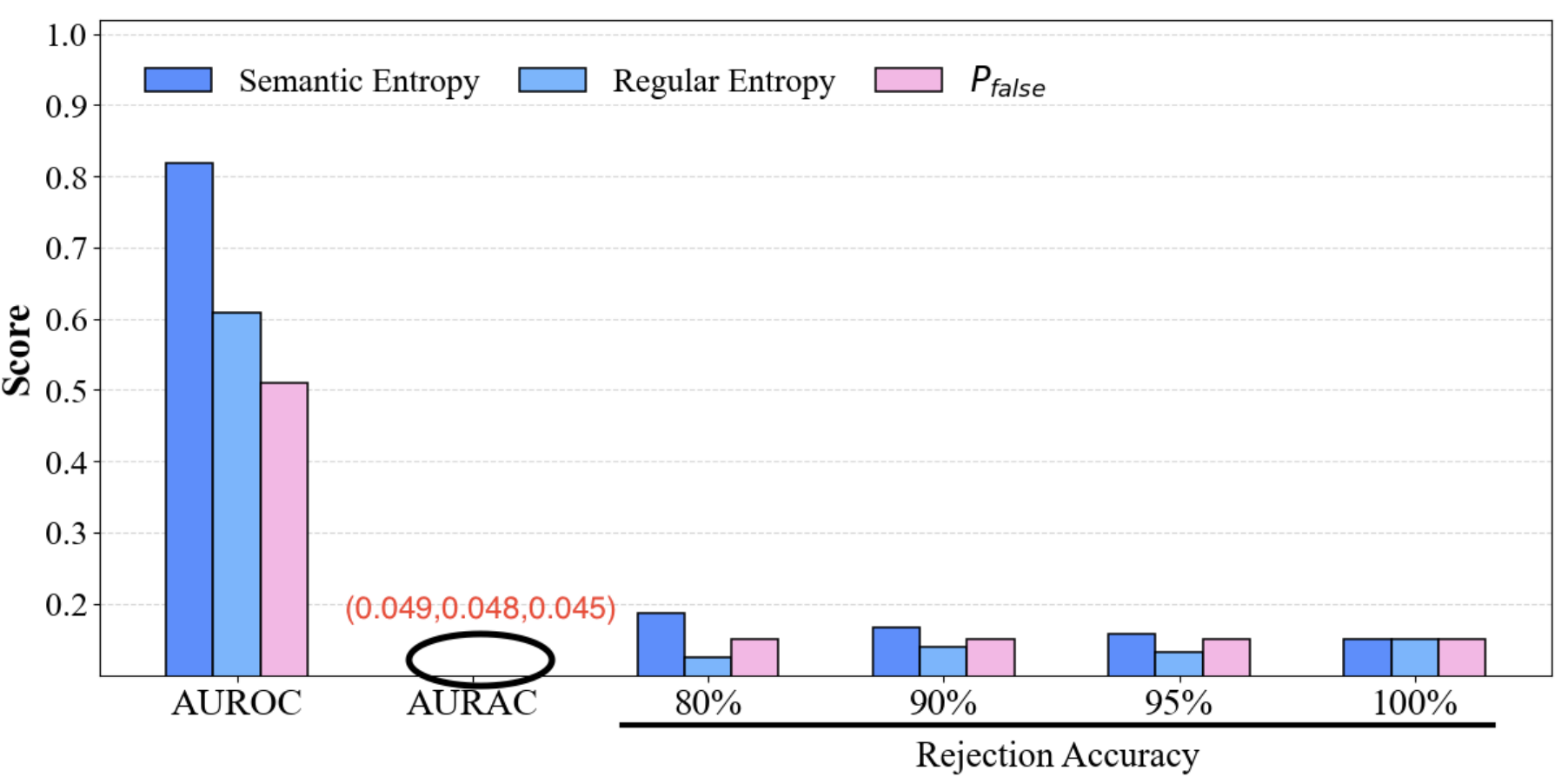}
        }
        \centerline{(\textit{c}) Mistral}
    \end{minipage}

    \begin{minipage}{0.3\linewidth}
        \centering
        \centerline{
        \includegraphics[width=1\textwidth]{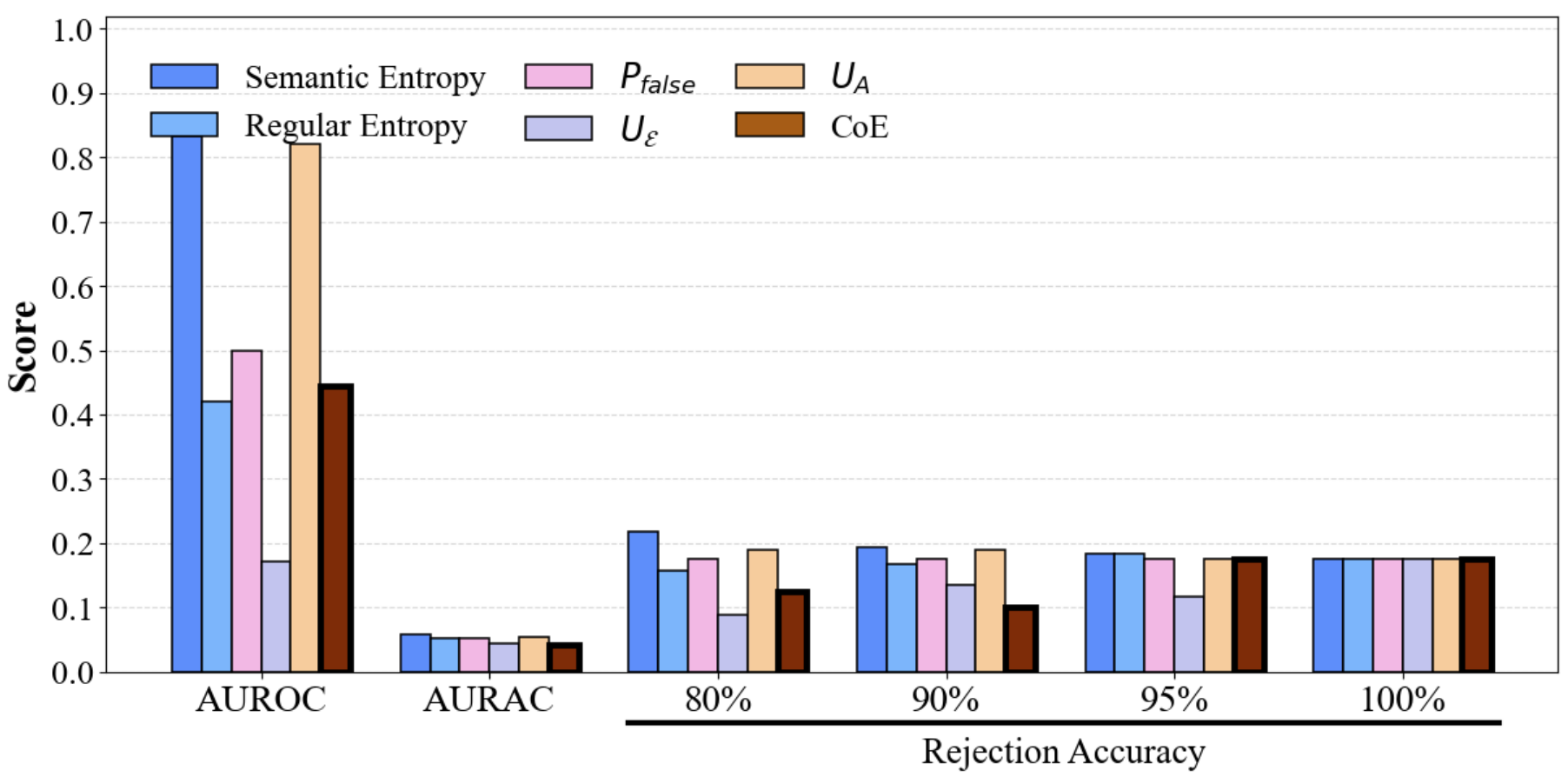}
        }
        \centerline{(\textit{d}) Llama + Qwen}
    \end{minipage}
    \begin{minipage}{0.3\linewidth}
        \centering
        \centerline{
        \includegraphics[width=1\textwidth]{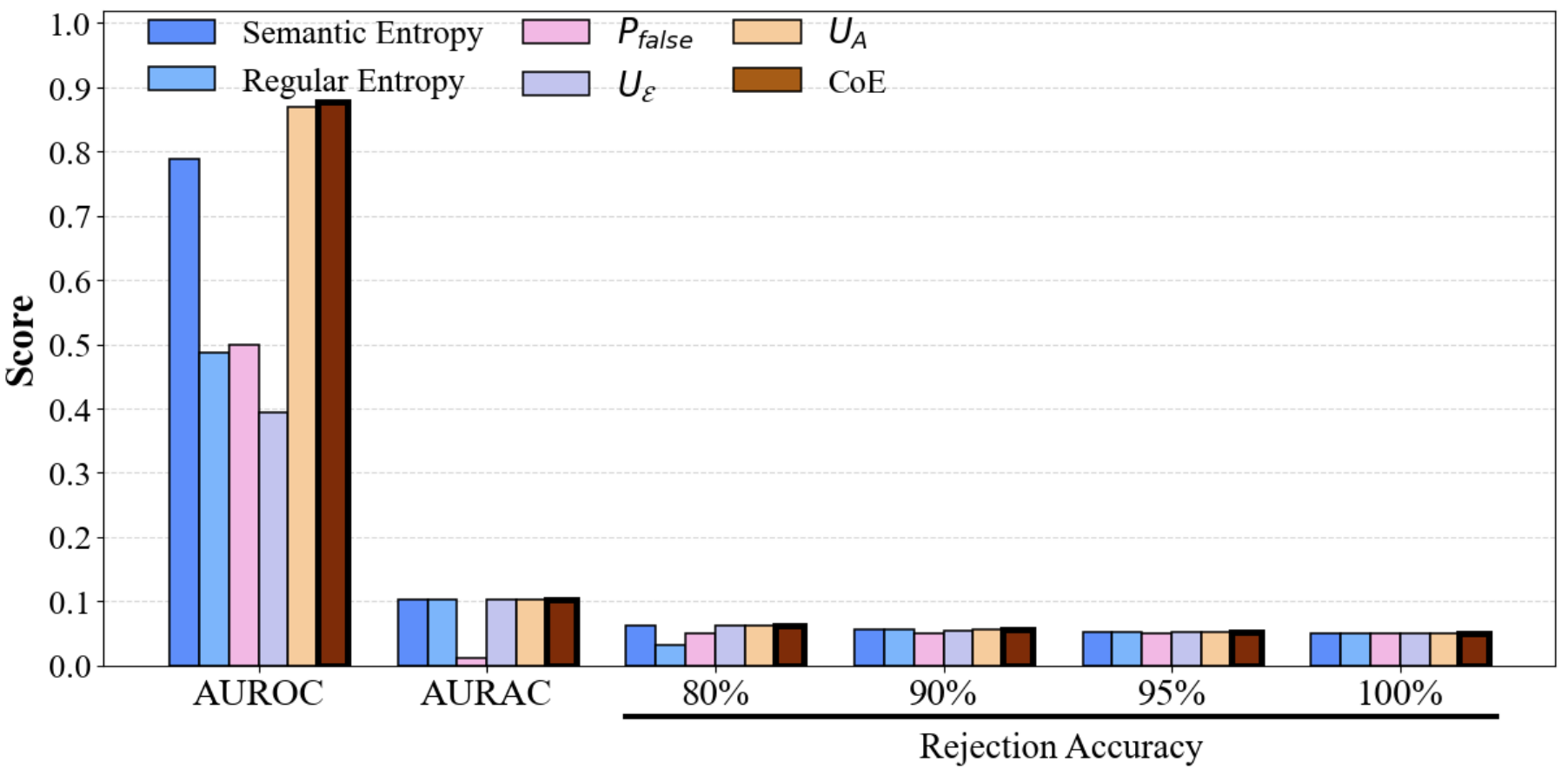}
        }
        \centerline{(\textit{e}) Llama + Qwen + Mistral}
    \end{minipage}
    \begin{minipage}{0.3\linewidth}
        \centering
        \centerline{
        \includegraphics[width=1\textwidth]{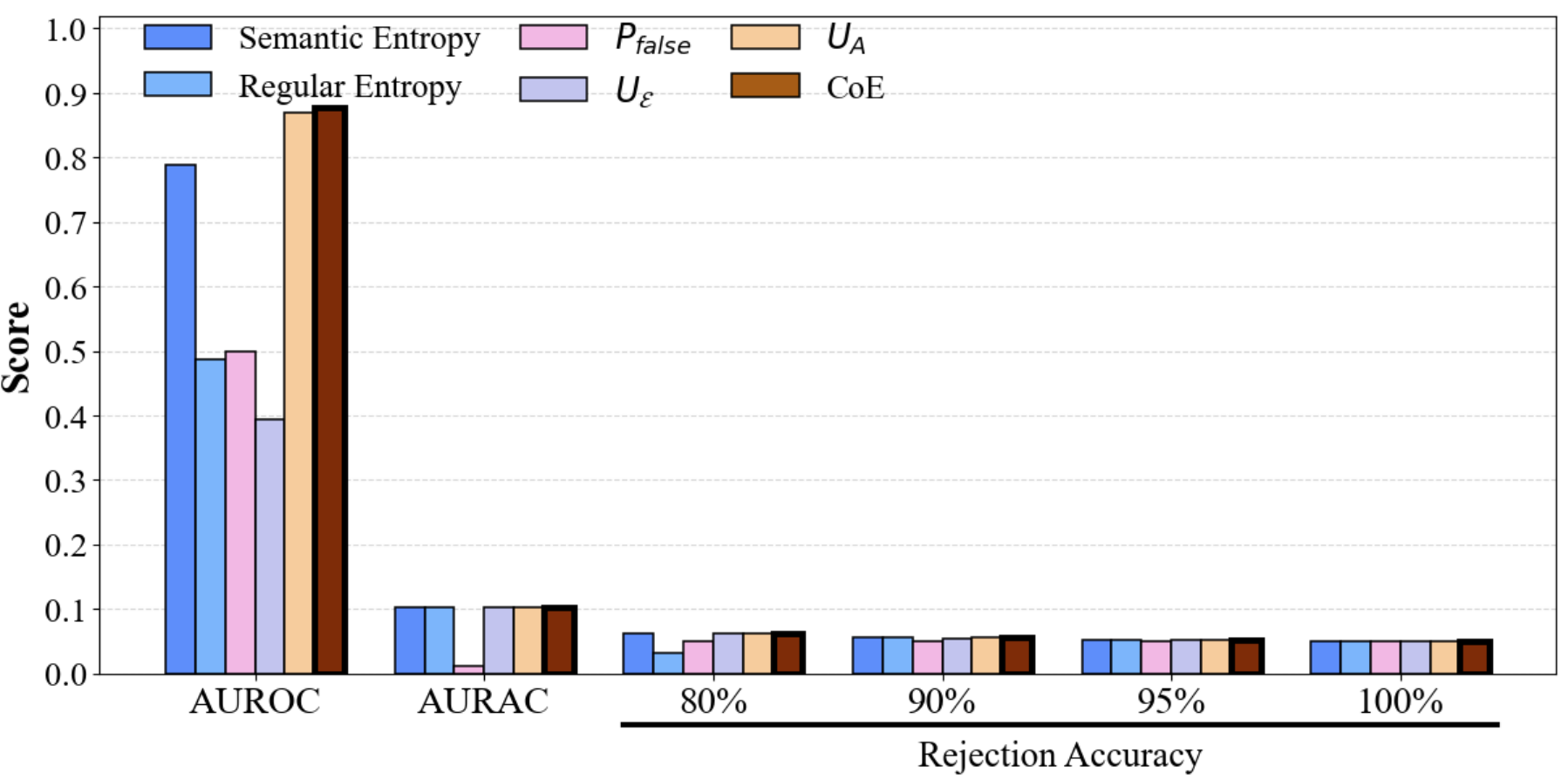}
        }
        \centerline{(\textit{f}) 2Llama + 2Qwen + 2Mistral}
    \end{minipage}
    \caption{Performance comparison of CoE with different divergence metrics on \textit{SQuAD} dataset (200 samples, KL divergence).}
    \label{fig: 3}
\end{figure*}
\begin{figure}[h]
    \centering
    \includegraphics[width=0.5\linewidth]{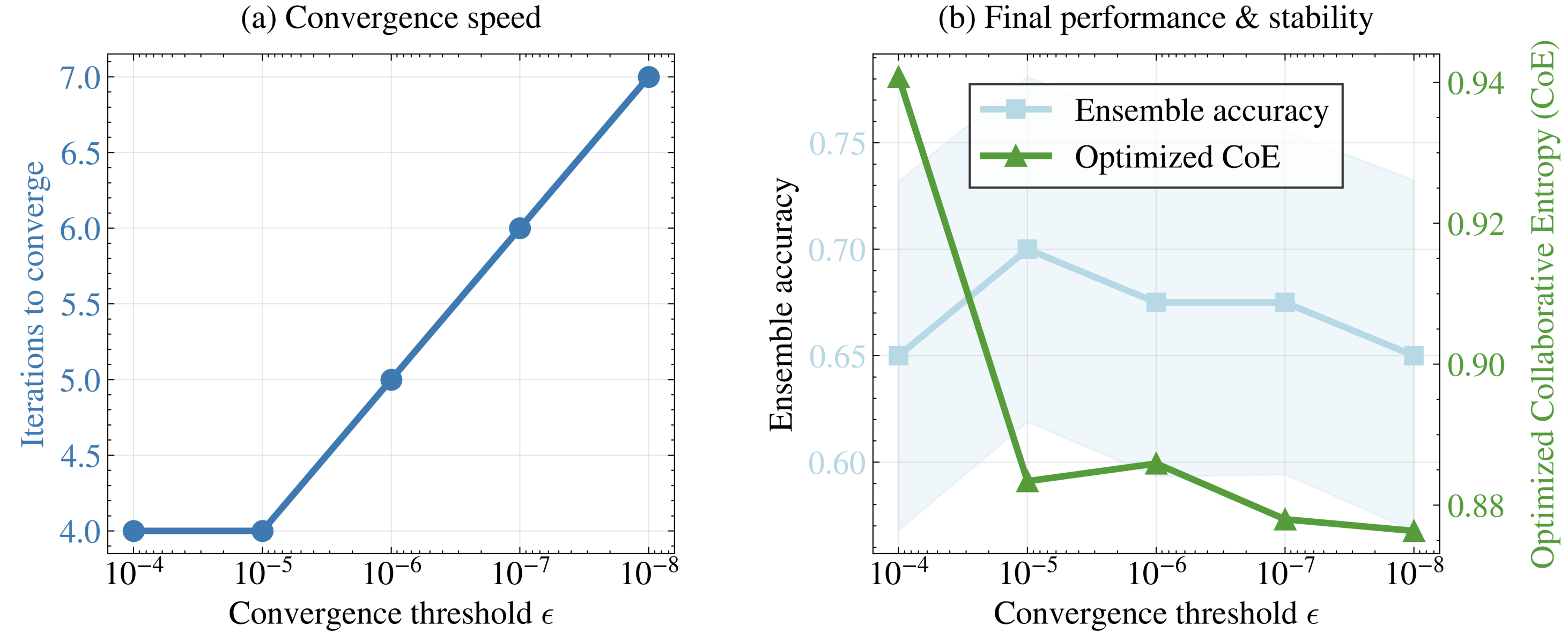}
    \caption{Robustness of the CoE-reduction algorithm to the convergence threshold $\epsilon$ (Llama + Qwen ensemble on \textit{TriviaQA}, 200 shots).}
    \label{fig: 4}
\end{figure}

\begin{figure}[t]
    \centering
    \begin{minipage}{0.3\linewidth}
            \centering
            \centerline{
            \includegraphics[width=1\textwidth]{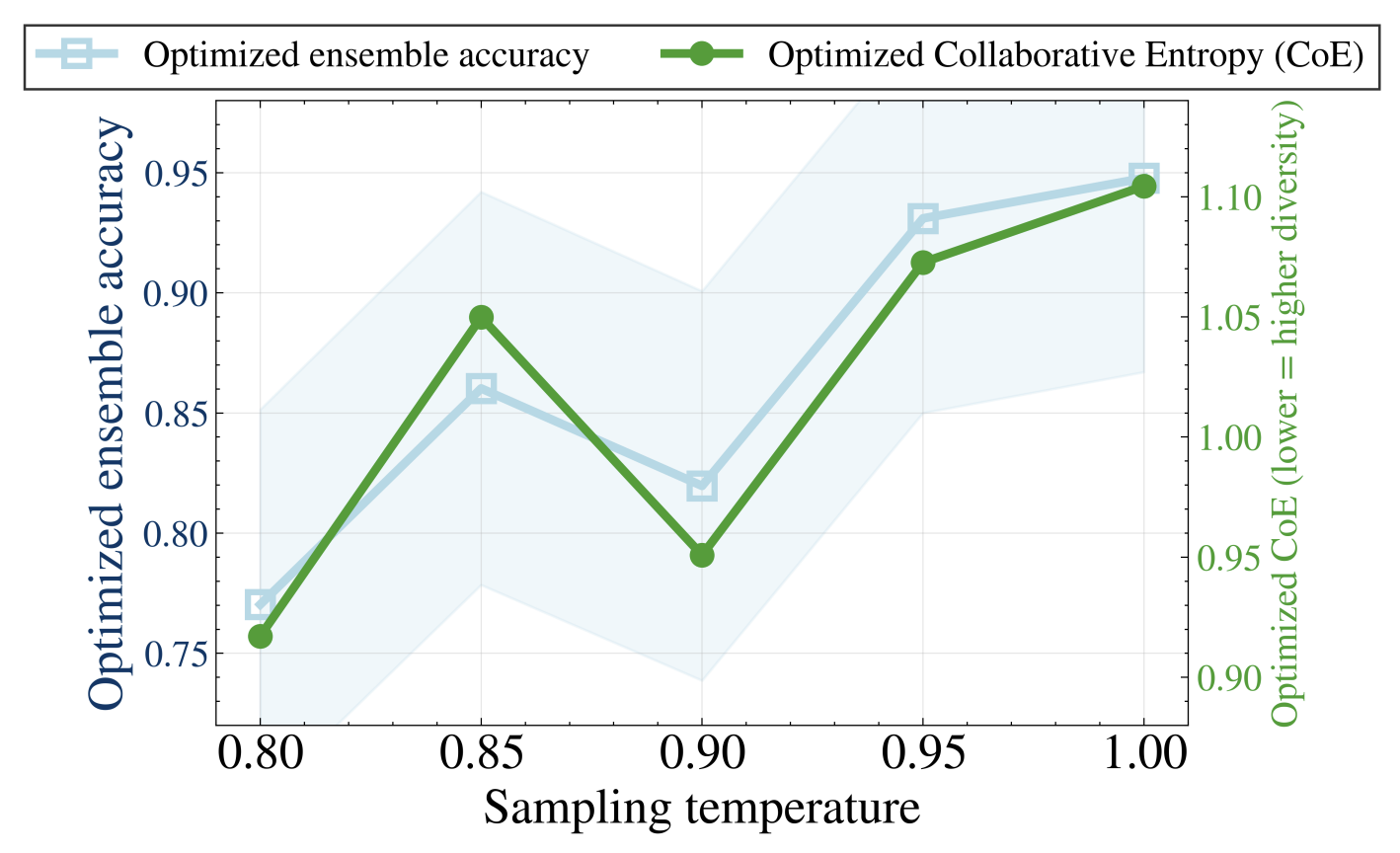}
            }
            \centerline{\small(\textit{a}) Different Temperatures }
    \end{minipage}
    \begin{minipage}{0.3\linewidth}
            \centering
            \centerline{
            \includegraphics[width=1\textwidth]{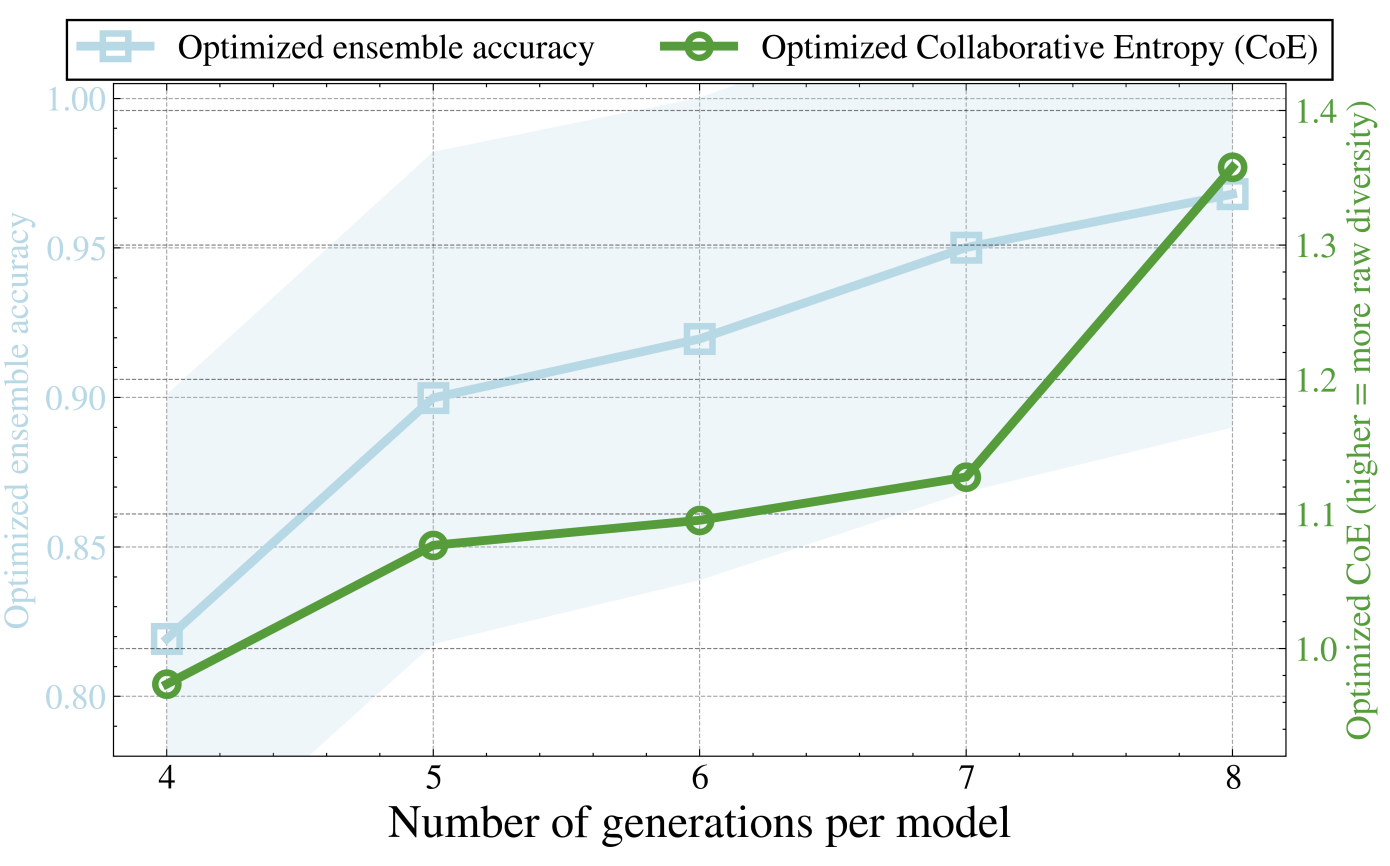}
            }
            \centerline{\small(\textit{b}) Different Generations }
    \end{minipage}
    \caption{Analysis of CoE-Reduction optimization algorithm under different sampling temperatures and generation counts (Llama + Qwen on \textit{TriviaQA} dataset with 200 in-context examples).}
    \label{fig: 5}
\end{figure}

Figure \ref{fig: 3} presents a parallel evaluation on the \textit{SQuAD} dataset, mirroring the structure of Figure \ref{fig: 2}. Subfigures (\textit{a})-(\textit{c}) assess the same individual models (Llama, Qwen, and Mistral), where Semantic Entropy again leads with the highest AUROC and rejection accuracy, though overall scores are lower than on \textit{TriviaQA} (e.g., $\text{AUROC} < 0.82$), reflecting \textit{SQuAD}'s emphasis on extractive reasoning and the models' varying capabilities in handling context-heavy tasks. The limited improvements in rejection accuracy (typically $< 0.2$ at 80\% retention) further expose single-model constraints in finer-grained uncertainty estimation. Subfigures (\textit{d})-(\textit{f}) highlight multi-LLM gains, with CoE's superiority scaling even more prominently on \textit{SQuAD}. In two-model (\textit{d}: Llama + Qwen), three-model (\textit{e}: Llama + Qwen + Mistral), and six-model (\textit{f}: 2Llama + 2Qwen + 2Mistral) setups, CoE consistently delivers the highest AUROC (reaching $0.878 \uparrow$ in three-model and $0.811 \uparrow$ in six-model configurations) and competitive rejection accuracy, surpassing baselines by up to 20\% in larger ensembles. This enhanced performance on \textit{SQuAD} validates CoE's robustness across datasets, as it effectively integrates inter-model divergence to mitigate context-specific hallucinations, outperforming methods like $\mathcal{U}_{\mathcal{A}}, \mathcal{U}_{\mathcal{E}}$, Semantic Entropy, Regular Entropy, and $P_{false}$. The results reinforce CoE's design efficacy in capturing collaborative signals, enabling more accurate UQ and selective prediction in diverse, real-world multi-LLM deployments for edge AI systems.

\paragraph{Convergence and Sensitivity} 

Figure \ref{fig: 4} evaluates the robustness of the CoE-reduction algorithm with respect to the convergence threshold \(\epsilon\) on the \textit{TriviaQA} dataset (200 shots, Llama + Qwen ensemble). In subplot (a), we observe that the number of iterations required for convergence increases linearly from approximately 4.0 to 7.0 as \(\epsilon\) decreases from \(10^{-4}\) to \(10^{-8}\), demonstrating that stricter convergence criteria impose higher computational costs but do not prevent the algorithm from converging. Subplot (b) reveals that the ensemble accuracy remains relatively stable, fluctuating between 0.65 and 0.70 with a peak near \(\epsilon = 10^{-6}\), indicating that the final performance of the ensemble is insensitive to the choice of \(\epsilon\). Meanwhile, the optimized CoE decreases from 0.94 to 0.88 as \(\epsilon\) becomes smaller, reflecting a trade-off between convergence stringency and the internal consistency of the ensemble. Together, these results confirm the robustness of the CoE-reduction algorithm: it maintains stable performance across varying convergence thresholds while exhibiting predictable convergence behavior.

\begin{figure}[t]
    \centering
    \includegraphics[width=0.52\linewidth]{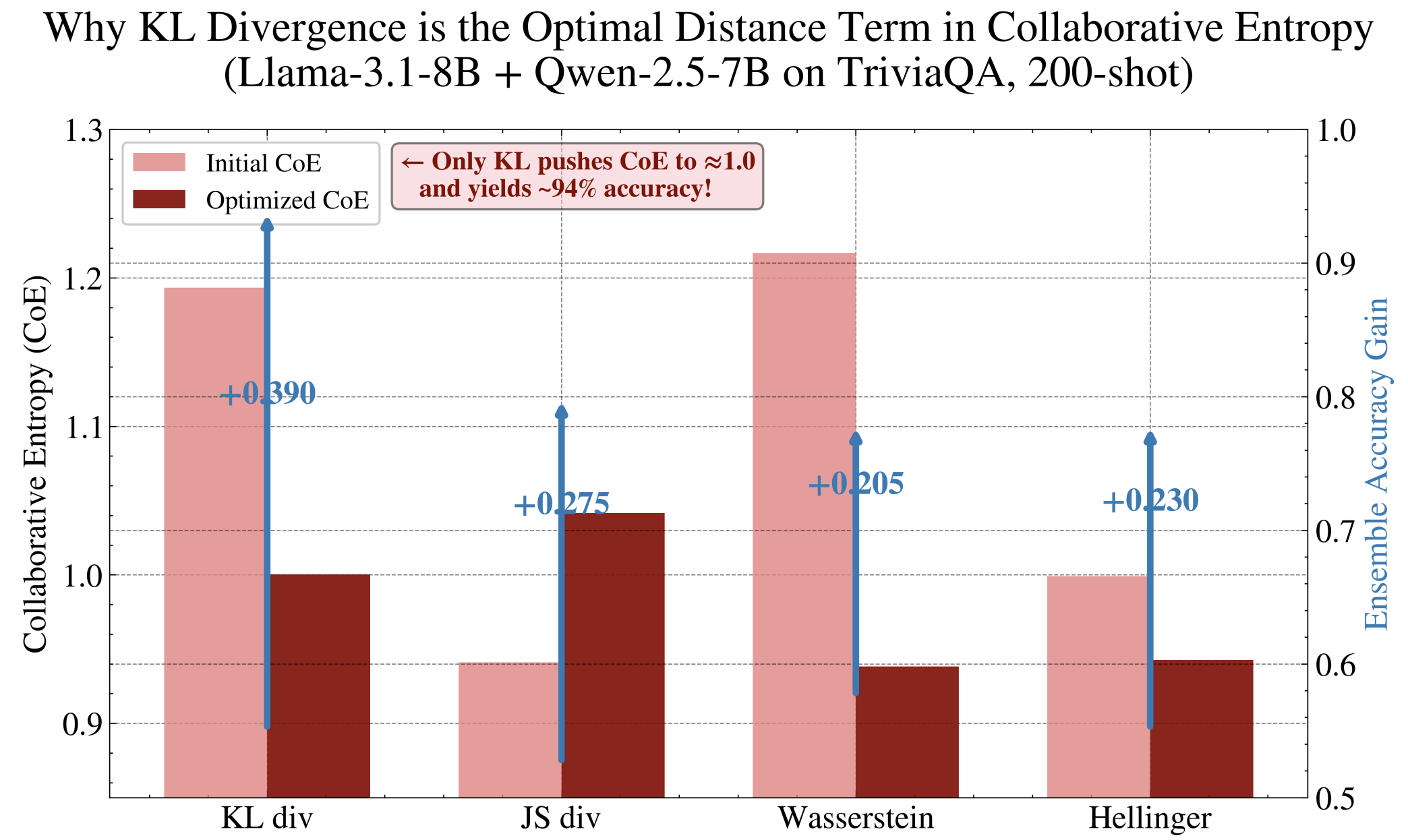}
    \caption{
        Accuracy gain and residual CoE after applying the CoE-guided coordination
        heuristic under four divergence measures (Llama + Qwen on TriviaQA,
        200 samples).
        Initial CoE (light bars) and optimised CoE (dark bars) are shown on the
        left axis; ensemble accuracy gain is annotated on each bar.
    }
    \label{fig: 6}
\end{figure}
\paragraph{Impact of Temperature and Sample Volume} 
Figure~\ref{fig: 5} demonstrates that CoE optimization is highly robust to sampling temperature. While higher temperatures ($0.8 \rightarrow 1.0$) naturally increase prediction diversity, CoE-guided weighting effectively transforms this stochasticity into useful signals, maintaining stable ensemble accuracy (92\%--95\%). Furthermore, performance scales gracefully with the number of generations (Figure~\ref{fig: 5}b); increasing samples from 4 to 8 per model yields a steady accuracy gain ($81.9\% \rightarrow 96.0\%$), indicating that CoE efficiently exploits available diversity even with low computational overhead.

\subsection{Analysis of the CoE-Guided Coordination Heuristic}
\label{se: heuristic eval}

\paragraph{Effect of divergence measure (Figure~\ref{fig: 6}).}

Figure~\ref{fig: 6} compares the four divergence instantiations of CoE on TriviaQA
(Llama + Qwen, 200 samples), reporting both initial and post-heuristic CoE values
and the resulting ensemble accuracy gain.
The asymmetric KL divergence is the only measure that drives CoE to approximately
its theoretical lower bound, yielding a \textbf{$+39.0\%$ accuracy gain}.
Symmetric alternatives achieve markedly smaller gains:
JS divergence ($+27.5\%$), Hellinger distance ($+23.0\%$), and
Wasserstein distance ($+20.5\%$) all leave a substantially larger residual CoE.

This result is consistent with the directional structure of inter-model epistemic
divergence.
$\mathcal{D}_{\mathrm{KL}}(p_i \| \bar{p})$ measures how far model $i$'s
distribution departs from the ensemble consensus—a naturally asymmetric quantity,
since the relevant question is whether an individual model is an outlier relative to
the group, not whether the group is an outlier relative to the individual.
Symmetric measures conflate these two directions and therefore underestimate the
epistemic divergence of confident but dissenting models.
We use KL divergence as the default in all CoE experiments.

\end{document}